\newtheorem{lemma}{Lemma}
\newtheorem*{lemma*}{Lemma}
\newtheorem{theorem}{Theorem}
\newtheorem*{theorem*}{Theorem}
\theoremstyle{definition}
\newtheorem{definition}{Definition}
\newtheorem*{definition*}{Definition}
\theoremstyle{definition}
\newtheorem{remark}{Remark}
\theoremstyle{definition}
\newtheorem{corollary}{Corollary}
\newtheorem*{corollary*}{Corollary}
\theoremstyle{definition}
\newtheorem*{claim*}{Claim}
\newcommand{\bbR}{\mathbb{R}}
\newcommand{\calA}{\mathcal{A}}
\newcommand{\calD}{\mathcal{D}}
\newcommand{\calM}{\mathcal{M}}
\newcommand{\calS}{\mathcal{S}}
\newcommand{\calX}{\mathcal{X}}
\newcommand{\calY}{\mathcal{Y}}
\renewcommand{\(}{\left(}
\renewcommand{\)}{\right)}
\newcommand{\eps}{\epsilon}
\newcommand{\SUB}[1]{\hspace{-0.15in} \textbf{#1}}
\newcommand{\mycaptionof}[2]{\captionof{#1}{#2}}
\newcommand*\samethanks[1][\value{footnote}]{\footnotemark[#1]}
\author{Seng Pei Liew \thanks{LINE Corporation. \texttt{\{sengpei.liew, satoshi.hasegawa, tsubasa.takahashi\}@linecorp.com}}
\And
Satoshi Hasegawa \samethanks[1]
\And
Tsubasa Takahashi \samethanks[1]}
\date{}
\begin{document}
\title{Privacy Amplification via Shuffled Check-Ins}

\maketitle

\begin{abstract}

We study a protocol for distributed computation called shuffled check-in, which achieves strong privacy guarantees without requiring any further trust assumptions beyond a trusted shuffler.
Unlike most existing work, shuffled check-in allows clients to make independent and random decisions to participate in the computation, removing the need for server-initiated subsampling. 
Leveraging differential privacy, we show that shuffled check-in achieves tight privacy guarantees through privacy amplification, with a novel analysis based on R{\'e}nyi differential privacy that improves privacy accounting over existing work. 
We also introduce a numerical approach to track the privacy of generic shuffling mechanisms, including Gaussian mechanism, which is the first evaluation of a generic mechanism under the distributed setting within the local/shuffle model in the literature. 
Empirical studies are also given to demonstrate the efficacy of the proposed approach.
\end{abstract}



\section{Introduction}\label{sec:introduction}


Distributed computation, particularly cross-device federated learning (FL), is a highly scalable and privacy-friendly computational framework of large-scale data analysis and machine learning.
To achieve rigorous privacy guarantees, FL has been combined with differential privacy (DP) \citep{dwork2006calibrating,dwork2006our} in several efforts \citep{mcmahan2017learning, kairouz2021advances}.
This is however typically studied under central DP, a trust model requiring rather strong trust assumptions.
Under central DP, individuals have to entrust the server to handle complex operations which increases the risk of privacy breaches, making it a less desirable trust assumption in practical deployment.

The local DP (LDP) model \citep{evfimievski2003limiting,kasiviswanathan2011can}, on the other hand, requires minimal trust assumption and is preferred in industry for practical deployment \citep{erlingsson2014rappor,apple2017learning,ding2017collecting}.
LDP allows users to randomize their report without requiring any trusted third-party entity, which significantly reduces the risk of privacy breaches.
However, this model comes at the expense of significant utility (e.g., classification accuracy of neural network) loss.
Therefore, intermediate models between the local and central models that balance the trust/utility trade-off has garnered attention in both academia and industry.

The shuffle/shuffled model is perhaps the most suitable intermediate model in practice \citep{bittau2017prochlo,cheu2019distributed,erlingsson2019amplification}.
The essence of this model is data anonymization: instead of being given the access of each user's LDP report, the adversary/analyzer is exposed only to a set of \textit{anonymized} LDP reports from the shuffler.
``Privacy amplification" via shuffling is shown to be feasible, where one can achieve the same utility with a smaller DP budget.

While the studies of novel architecture/techniques in FL with DP are gaining traction, they often require rather sophisticated trust assumptions beyond those achievable by well-established security/trust models (e.g., shuffler or the secure aggregation protocol \citep{balle2020privacy, kairouz2021practical, nguyen2022federated}).
For example, \citep{nguyen2022federated} assumes that the server is able to collect, clip, randomize and aggregate user update without violating privacy.
\textit{By contrast, our focus is on achieving rigorous privacy guarantees of distributed computation making as few trust assumptions as possible}: we pursue a ``minimalist" shuffle-model approach (in the sense that we utilize only the shuffle model as the trust model; the ultimate minimalist approach without shuffler would be the local DP model) and make full use of its privacy amplficiation effects, instead of making more sophisticated trust assumption.
We emphasize that this is particularly relevant in industrial deployment where user privacy is at stake and is of priority: simple, minimal and well-established models are preferable in gaining user consent/trust.

With this in mind, our research question is: \textit{how can we leverage the shuffle model to achieve tight privacy accounting for distributed computation, without relying on further trust assumption?} 

In this paper, we study shuffled check-in, a minimal framework utilizing only the shuffler.
Our protocol achieves this by letting user to randomize her report, and ``check in" to the shuffler randomly and autonomously, without putting trust on the server to index the users explicitly.
This allows for privacy amplification via shuffling and subsampling \citep{ullman}, where the randomness of user participation in learning further enhances the privacy guarantee.
See Figure \ref{fig:sci_scheme} for an illustration.

We note that \citep{girgis2021differentially} has proposed a protocol called self sampling, which is essentially the same as our shuffled check-in protocol.
However, there, the analysis on the varying number of user is done by "fixing" it using a high-probability bound, strong composition is used, and only the pure DP mechanism and its theoretical bound is studied.
Here, we present a significant and non-trivial extension of previous work: we achieve a \textit{stark improvement in privacy guarantees} (using RDP without employing such a high-probability bound) and \textit{generalization} (covering approximate DP mechanism especially Gaussian mechanism) of the protocol as well as providing \textit{concrete empirical evaluations}.

\begin{figure}
\begin{floatrow}
\ffigbox{%
    \includegraphics[width=0.38\textwidth]{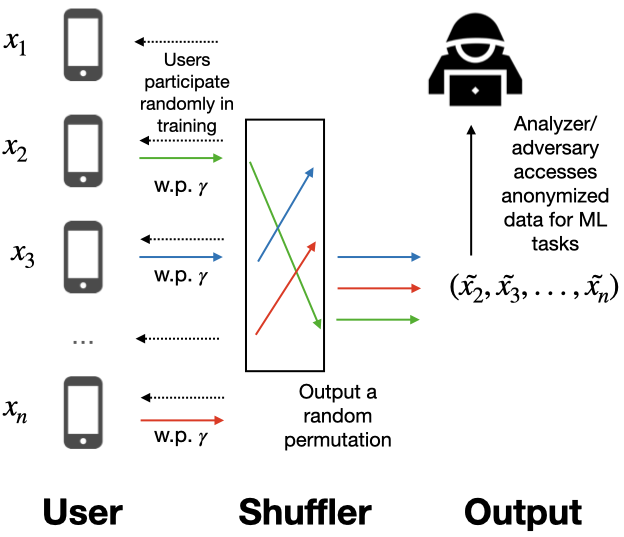}
}{%
  \caption{Shuffled check-in protocol works as follows: In each round, user randomly decides whether to participate in the training with probability $\gamma$, without being indexed explicitly by the server.
  Reports contributed by the users are shuffled/anonymized before being forwarded to the analyzer.
  Shuffled check-in hence requires no trust assumption beyond the shuffler.
  }
\label{fig:sci_scheme}
}%
\ffigbox{%
    \includegraphics[width=0.48\textwidth]{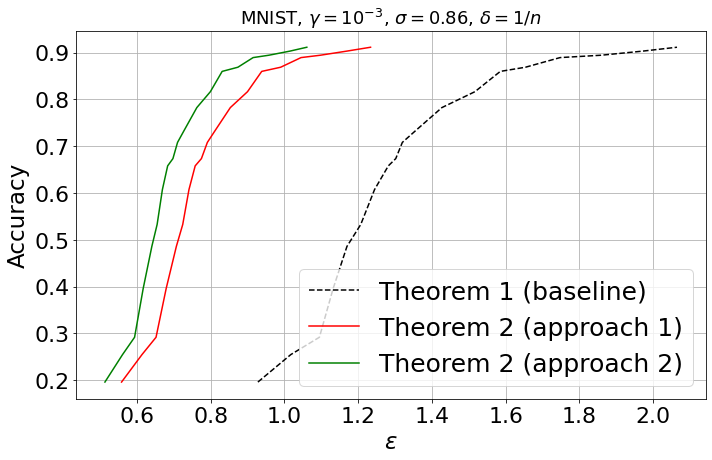}
}{%
  \caption{Our tighter privacy analysis achieves high classification accuracy at lower privacy budget on the MNIST dataset in the distributed setting (details in Section \ref{sec:discussion}). Our approaches 1 and 2 achieves stronger privacy amplification via Equation \ref{eq:main} and outperform the naive baseline (Theorem 1) based on \citep{girgis2021differentially}.
  \label{fig:exp}
    }%
}
\end{floatrow}
\end{figure}
\subsection{Our contribution}
To this end, we analyze the R{\'e}nyi differential privacy (RDP) of a novel subsampling mechanism to characterize shuffled check-in's privacy amplification effects, culminating to Theorem \ref{th:main}, encapsulated by the following expression:
\begin{equation*}
    \eps(\lambda) \leq \frac{1}{\lambda-1} \log \left(
    \sum_{k=0}^{n}  \binom{n}{k} \gamma^{k} (1-\gamma)^{n-k} \mathbb{E}_{q_k}\left[\left(\frac{p_k}{q_k}\right)^{\lambda}\right]
    \right).
\end{equation*}
See Section \ref{subsec:sci_rdp} for details.
We show that our techniques provide much tighter privacy accounting.

Another contribution we make in this paper is the application of Gaussian mechanism within the context of shuffled check-in (and shuffling). 
The Gaussian mechanism is optimal for DP empirical risk minimization \citep{bassily2014private} and easy to implement compared to alternative methods,
\footnote{For example, the LDP-SGD algorithm \citep{duchi2018minimax,erlingsson2020encode}, which proceeds roughly as follows. 
The client-side algorithm first clips user gradient, and flips its sign with a certain probability.
Then, a unit vector is sampled randomly to make an inner product with the processed gradient to yield the inner product's sign.
The sign is again flipped with a certain probability, before sending it along with the unit vector to the server.
The server-side algorithm includes normalizing the aggregated reports, and projecting them to a convex set before updating the model.
See, e.g., \citep{erlingsson2020encode} for details.}
but it has not been studied before possibly due to the difficulty of existing techniques in handling approximate DP mechanisms like the Gaussian mechanism.
We propose a numerical but practical approach to track the privacy budget for such a mechanism.
A lower bound for Gaussian mechanism is also derived to characterize the tightness of the upper bound.
A snapshot of our results is shown in Figure \ref{fig:exp}, where it can be seen that our approaches provide tighter (lower $\epsilon$) privacy guarantees compared to baseline. 


The novelty and highlight of this work are outlined as follows.
\begin{itemize}[leftmargin=*]
\item 
\textit{The shuffled check-in mechanism we study is a new and novel type of subsampling.}
In particular, we focus on cases where the base mechanism (DP mechanism before subsampling) depends on the number of subsampled users, which we refer to as ``population-aware" (see Section \ref{subsec:related}).
This consideration has not been explored in the existing literature, which only focuses on base mechanisms that are agnostic of the number of subsampled users.
Therefore, we develop new tools to analyze this type of mechanism in Section \ref{sec:sci}. Our study of this new form of subsampling is motivated by the need for minimal trust, such as in the shuffled check-in protocol. 
Additionally, population-aware base mechanisms are applicable to secure aggregation protocols or distributed DP mechanisms without explicit user indexing (Section \ref{sec:discussion}).
Consequently, our findings have practical significance.

\item 
\textit{Gaussian mechanism in the context of shuffling:} 
The Gaussian mechanism is one of the most fundamental and commonly used DP mechanisms, yet its RDP composition property has not been examined under the context of shuffling.
In Sections  \ref{sec:sci} and \ref{sec:lower}, we introduce new analytical and numerical tools to analyze the Gaussian mechanism in the context of shuffling.
Specifically, we provide upper and lower bounds that characterize its privacy properties.
 \item 
 \textit{Impacts:} 
 Our findings have direct implications for improving and generalizing the shuffled check-in protocol analyzed using existing techniques.
 We further emphasize that the impact of our techniques extends beyond this protocol. 
 Our methods can not only be applied to the distributed DP scenario, as mentioned earlier, but also provide insights into centralized learning, which typically involves conducting DP-SGD through shuffled mini-batches while often ignoring the shuffling effect. 
 We discuss the wider implications of our research in Section \ref{sec:discussion}.
\end{itemize}

\subsection{Related work in privacy accounting}
\label{subsec:related}
Here, we mainly discuss related works in privacy accounting and their difference compared to our subsampling scheme.
Other related work can be found in Appendix \ref{app:pre}.
Typically in FL, a subsampling procedure is taken, where in each round of training, only clients sampled in a uniform and random manner by the server participate in the training.
This randomness of subsampling leads to privacy amplification in FL with DP, critical at achieving acceptable levels of utility under meaningful DP guarantees \citep{abadi2016deep,bassily2014private,kasiviswanathan2011can}.
Discussing privacy amplification via subsampling is intricate: one has to be careful about what definition of neighboring dataset in use is (removal or replacement), and the type of subsampling (subsampling with/without replacement, Poisson subsampling) employed \citep{balle2018privacy,wang2019subsampled,steinke2022composition}.
That each user has certain probability of checking in as in our protocol is similar to Poisson subsampling, where the number of user or data instance checking in at each round is a variable. 
However, there is a key difference: in previous works, the noisy output (in the simplest case, a scalar value) \textit{does not} leak information about the number of check-ins, whereas our protocol outputs a set of values where the cardinality of the set is the number of check-ins.
More technically, the base mechanism (shuffling) in our protocol is dependent on the number of check-ins, or in other words coupled to the subsampling operation (\textit{population-aware}), whereas previous works deal only with base mechanism that is independent of the number of user/data instance (decoupled from the subsampling operation \citep{balle2018privacy,zhu2019poission,girgis2021renyia}).
This leads to new non-trivial analysis to be presented in Section \ref{sec:sci}. 
We note particularly that the Poisson subsampled shuffling analysis studied in \citep{girgis2021renyia} is flawed as the facts that the shuffling base mechanism is dependent on the number of check-ins, and that shuffling should be considered under replacement DP, have been overlooked.
We give a proper treatment including these considerations in this work.

\newlength{\oldintextsep}
\setlength{\oldintextsep}{\intextsep}

\setlength\intextsep{0pt}

\section{Problem Setup and Preliminaries}\label{sec:pre}
\noindent\textbf{Problem setup.}
We consider the distributed setting, where there are $n$ users each holding a report $x_i$ for $i \in [n]$, and the whole decentralized set of reports is denoted by $D = (x_1,\dotsc,x_n)$.
As mentioned earlier, we assume the existence of a trustworthy shuffler that is able to establish a secure communication channel with each user to receive the (randomized) report.
The untrusted server/aggregator can communicate with all users (assumed to be honest but curious) but cannot index any specific user nor receive report directly from user.

We next present important definitions and known results of differential privacy.
\begin{definition}[Central Differential Privacy \citep{Dwork14}]
Given $\epsilon \geq 0$ and $\delta \in [0,1]$, a randomization mechanism, $\mathcal{M}: \mathcal{D}^n \rightarrow \calS$ with domain $\mathcal{D}^n$ and range $\mathcal{S}$ satisfies central ($\epsilon$, $\delta$)-differential privacy (DP) if for any two adjacent databases $D, D' \in \mathcal{D}^n$ with $n$ data instances and for any subset of outputs $S \subseteq \mathcal{S}$, the following holds:
\begin{align}
\Pr[\mathcal{M}(D) \in S] \leq e^\epsilon \cdot \Pr[\mathcal{M}(D') \in S] + \delta.
\end{align}
\end{definition}
We say that an $(\eps,\delta)$-DP mechanism satisfies \textit{approximate} DP, or is $(\eps,\delta)$-indistinguishable.
$(\eps,\delta)$-DP is also simply referred to as ``DP" when the context is clear.
Moreover, we mostly work with the "replacement" version of DP, where adjacent databases have one data instance replaced by another data instance. \footnote{This is the natural definition for shuffling: the two adjacent datasets should have the same number of shuffled data instances.}
When $D$ consists of only a single element, the mechanism, also known as a local randomizer, is said to be satisfying local DP:
\begin{definition}[Local Differential Privacy (LDP) \citep{kasiviswanathan2011can}]
A randomization mechanism $\calA: \mathcal{D}^{n} \rightarrow \mathcal{S}$ satisfies local $(\eps,\delta)$-DP if for all pairs $x,x'\in \mathcal{D}$, $\calA(x)$ and $\calA(x')$ are $(\eps,\delta)$-indistinguishable.
\end{definition}
We often refer to a mechanism satisfying $(\eps,0)$ LDP as an $\eps$-LDP randomizer.
Next, we introduce the notion of shuffling.
\begin{definition}[Shuffling \citep{cheu2019distributed,erlingsson2019amplification}]
For any domain $\mathcal{D}$, let $\mathcal{A}^{i}:\mathcal{S}^{1}\times\cdots\times\mathcal{S}^{i-1}\times\mathcal{D}\to\mathcal{S}^{i}$ for $i\in[n]$ (where $\mathcal{S}^{i}$ is the range space of $\mathcal{A}^{i}$) be a sequence of algorithms such that $\mathcal{A}^i(z_{1:i-1}, \cdot)$ is a LDP randomizer for all values of auxiliary inputs for $z_{1:i-1}\in\mathcal{S}^{1}\times\cdots\times\mathcal{S}^{i-1}$. 
 $\mathcal{A}_{s}:\mathcal{D}^{n}\to\mathcal{S}^{1}\times\cdots\times \mathcal{S}^{n}$ is the shuffling algorithm that given a dataset, $x_{1:n}\in\mathcal{D}^{n}$, samples a uniform random permutation $\pi$ over $[n]$, then sequentially computes $z_i=\mathcal{R}[i](z_{1:i-1}, x_{\pi(i)})$ for $i\in[n]$ and outputs $z_{1:n}$. 
\end{definition}
For $\mathcal{A}^{i}$ that satisfies $(\epsilon_0,\delta_0)$-DP, if $\mathcal{A}_{s}$ satisfies $(\epsilon,\delta)$ such that $\epsilon \leq \epsilon_0$, we say that privacy amplification has occured via shuffling.  
We next introduce R{\'e}nyi differential privacy \citep{mironov2017renyi,dwork2016concentrated,bun2016concentrated}, the main privacy notion used in this paper.
\begin{definition}[R{\'e}nyi Differential Privacy (RDP)~\citep{mironov2017renyi}]\label{def:RDP}
A randomization mechanism $\mathcal{M}: \mathcal{D}^{n} \rightarrow \mathcal{S}$ is $\epsilon$-R{\'e}nyi differential privacy of order $\lambda\in(1,\infty)$ (or $(\lambda,\epsilon)$-RDP), if for any adjacent databases $D$, $D'\in\calD^{n}$, the R{\'e}nyi divergence of order $\lambda$ between $\calM(D)$ and $\calM(D')$ is upper-bounded by $\eps$:
$D_{\lambda}(\calM(D)||\calM(D'))=\frac{1}{\lambda-1}\log\left(\mathbb{E}_{\phi\sim\calM(D')}\left[\left(\frac{\calM(D)(\phi)}{\calM(D')(\phi)}\right)^{\lambda}\right]\right) \nonumber \leq \epsilon$,
where $\calM(\calD)(\phi)$ denotes the probability of $\calM$ which takes $D$ as input outputting $\phi$.
\end{definition}
We sometimes write $\eps$ as $\eps(\lambda)$ to indicate that it is a function of $\lambda$.
Given an approximate DP mechanism, we may wish to convert it to the RDP to obtain tighter composition-based privacy accounting. 
The conversion is performed with the following lemma.
\begin{lemma}[DP-to-RDP conversion \citep{asoodeh2021three}]
\label{lm:dprdp}
If a mechanism $\calM$ is $\left(\epsilon,\delta\right)$-DP, $\calM$ is also $\left(\lambda,\epsilon\left(\lambda\right)\right)$-RDP, where $\lambda > 1$ and $\eps(\lambda)$ is given by
\begin{equation}
\label{eq:dprdp}
\epsilon(\lambda) = \epsilon + \frac{1}{\lambda -1}\log M(\lambda,\epsilon,\delta),
\end{equation}
where
$M(\lambda,\epsilon,\delta) = \min_{r \in (\delta, 1)} \left( r^{\lambda}(r-\delta)^{1-\lambda} + (1-r)^{\lambda} (e^{\eps}-r+\delta)^{1-\lambda} \right).$
\end{lemma}
Note that $M(\lambda,\epsilon,\delta)$ can be solved efficiently as it is a convex optimization problem \citep{asoodeh2021three}.
We relegate other preliminaries to Appendix \ref{app:pre}.
\section{Shuffled Check-in}
\label{sec:sci}
In this section, we first give a formal description of the protocol, then we present the first-cut approximate DP-based privacy analysis of the protocol, followed by our core RDP-based analysis.
All proofs of our theorems can be found in Appendix \ref{app:proof}.

\noindent \textbf{The protocol.}
We begin by describing the protocol of shuffled check-in.
In each round $t$, a message is broadcast to all clients to ask for participation in the computation.
Each client flips a biased coin to decide whether to participate.
The probability of a client participating in the computation is modeled by a parameter $\gamma$ ($0 \leq \gamma \leq 1$), which we call the \textit{check-in rate}.
The participating probability follows the Bernoulli distribution, Bern($\gamma$).

The protocol is also described in Algorithm \ref{alg:protocol} in Appendix \ref{app:algo}.
Writing symbolically ``shuff" as the shuffling mechanism, the protocol is represented by
\begin{equation}
    \calM(D) := \text{shuff}(\{\calA(x_i)| \sigma_i=1, i \in [n]\}), \quad \sigma_i \sim \text{Bern}(\gamma).
    \label{eq:bern}
\end{equation}

  


\subsection{First-cut privacy analysis}
\label{subsec:baseline}
Given that the number of user checking in varies in each round in shuffled check-in, common subsampling lemmas are not applicable.
The straightforward way to tackle this is to analyze, using a high-probability bound, the mechanism with a fixed number of user. 
This approach is taken by \citep{girgis2021shuffled,wu2022walking} which utilize Bernstein inequality and looser shuffling privacy results.
In the following, we give an improved result using a one-sided inequality and utilizing improved shuffling privacy results.

\begin{theorem}[First-cut privacy accounting of shuffled check-in]
\label{th:naive}
Let $n$ be the total number of users, and $\gamma$ be the check-in rate.
Let $B$ be a binomial distribution  parameters $n$ and $\gamma$, and $\beta$ such that it satisfies $Pr(B \geq l)\leq \beta$ for certain $l$ ($l \leq n$).
For $(\epsilon_0, \delta_0)$ LDP randomizer and $\delta$ satisfying $\epsilon_0 \leq \log{\frac{l}{16\log{2/\delta}}}$, the privacy parameters of shuffled check-in are given by
\begin{align}
\epsilon = \log \left( 1 + \left( \frac{e^{\epsilon_0}-1}{e^{\epsilon_0}} \right) \left( \frac{8\sqrt{le^{\epsilon_0} \log{\frac{4}{\delta}}}}{n}+ \frac{8e^{\epsilon_0}}{n} \right) \right),\; \delta = \beta + \delta_l,
\end{align}
where $\delta_l = \frac{l}{n} \left( \delta + (e^{\epsilon}+1)(1+e^{-\epsilon_0}/2)l\delta_0\right)$.
\end{theorem}

\begin{remark}
    The above bound is calculated by ``fixing" the number of user, i.e., $l$, as mentioned above.
    $l$ should be chosen to be small such that $\epsilon$ is small, but it has to be adjusted carefully such that $\delta$ does not exceed any pre-specified value (e.g., the reciprocal of dataset size).
\end{remark}
We are mostly interested in the privacy accounting of the shuffled check-in mechanism when it is applied multiple times (i.e., privacy composition).
To do so, we convert the approximate DP expression to RDP using Equation \ref{eq:dprdp} and perform the privacy accounting and convert it back to approximate DP after accounting for privacy composition.
This method serves as the baseline to be compared with our proposed RDP-based analysis, to be presented next.

\subsection{R{\' e}nyi privacy analysis}
\label{subsec:sci_rdp}
The first-cut approach given previously can only give a conservative estimation of the privacy guarantees as only a fixed number of check-ins is considered.
Our proposed approach using RDP given in the following is more precise in the sense that it considers the contribution of all possible number of check-ins.

\begin{theorem}[RDP of shuffled check-in]
\label{th:main}
Let $D,D'$ be adjacent databases consisting of $n$ clients, and $\gamma$ the check-in rate.
The RDP of order $\lambda$ of the shuffled check-in mechanism is bounded as in Equation \ref{eq:main}.
\begin{equation}
    \eps(\lambda) \leq \frac{1}{\lambda-1} \log \left(
    \sum_{k=0}^{n}  \binom{n}{k} \gamma^{k} (1-\gamma)^{n-k} \mathbb{E}_{q_k}\left[\left(\frac{p_k}{q_k}\right)^{\lambda}\right]
    \right).
    \label{eq:main}
\end{equation}
Here, $p_k$ is the output probability distribution of the subsampled without replacement (of $k$ data instances and sampling rate $k/n$) mechanism.
$q_k$ is the output probability distribution induced by the same mechanism but on $D'$. 
Furthermore, 
Moreover, $\mathbb{E}_{q_k}\left[\left(\frac{p_k}{q_k}\right)^{\lambda}\right]\leq 1 + \sum_{j=2}^{\lambda} \left(\frac{k}{n}\right)^j\binom{\lambda}{j} \zeta^{j}_k(j),$ where $\zeta^{j}_k$ is the ternary DP of the shuffle mechanism with $k$ instances shuffled, can be understood as the RDP of the subsampled (without replacement) shuffle mechanism with subsampling rate $k/n$ and $k$ instances shuffled.
\end{theorem}
\begin{remark}
    In common privacy analyses of subsampling, one separates the subsampling operation from the base mechanism, assuming that the base mechanism is independent of the subsampling operation \citep{balle2018privacy,wang2019subsampled,zhu2019poission}.
Shuffled check-in inherently ``couples" the base mechanism with the subsampling/check-in operation, as shown by terms in the summation of Equation \ref{eq:main}.
\end{remark}

\begin{remark}
Theorem \ref{th:main} states that the shuffled check-in's RDP can be interpreted as the RDP of a subsampled shuffle version of the base mechanism weighted by a binomial distribution.
This expression holds for any base mechanism (Gaussian mechanism, Laplace mechanism, etc.).
\end{remark}

\subsection{Shuffled check-in with approximate LDP}
\label{subsec:sci_adp}
Theorem \ref{th:main} relates the check-in operation to the (shuffled) base mechanism's ternary DP (see Appendix \ref{app:pre} for definition).
In this subsection, we study how to calculate these values concretely.

Given an $(\eps_0,\delta_0)$-LDP randomizer, our strategy is to first convert it to the corresponding RDP parameters using Lemma \ref{lm:dprdp}.
The RDP parameters are then plugged into Equation \ref{eq:main} to calculate the resulting $\eps(\lambda)$.
Note that the subsampling and shuffling mechanisms' approximate DP properties are relatively well studied \citep{feldman2022hiding,wang2019subsampled}.
By converting them into RDP and utilizing Theorem \ref{th:main}, one can calculate the RDP of shuffled check-in in a rather straightforward way.
In the following, we propose two approaches of tackling the problem.

\noindent\textbf{Approach 1: Shuffling conversion.}
Our first approach proceeds in three steps: 1) Convert shuffle DP to shuffle RDP. 2) Use shuffle RDP to evaluate subsampled shuffle RDP. 3) Substitute it into Equation \ref{eq:main} to evaluate the composition of RDP.
To calculate the shuffle DP with $(\eps_0,\delta_0)$-LDP randomizer, we use the equations given below Definition \ref{def:ternary}. 

Note that Step 2 is a calculation of $O(\lambda)$ in terms of time complexity \citep{wang2019subsampled}.
Step 3 involves evaluating the summation with respect to $n$ as in Equation \ref{eq:main}, which is of $O(n)$.
One also needs to convert the RDP notion back to approximate DP using Lemma \ref{lm:rdpdp} (see Appendix \ref{app:pre}), which is an $O(\log\lambda)$ operation, as explained below Lemma \ref{lem:RDP_DP}.
Overall, the shuffling conversion approach is of time complexity $O(n\lambda \log\lambda)$.

\noindent\textbf{Approach 2: Subsampled shuffling conversion.}
Our second approach converts the subsampled shuffle mechanism's DP to RDP: 1) Evaluate subsampled shuffle DP. 2) Convert subsampled shuffle DP to subsampled shuffle RDP. 3) Substitute it into Equation \ref{eq:main} to evaluate the composition of RDP.
The approximate DP of subsampled shuffling can be calculated using the result of \citep{feldman2022hiding} for shuffling and Theorem 9 of \citep{balle2018privacy} for subsampling, with both having $O(1)$ time complexity.
Hence, in terms of time complexity with respect to $\lambda$, the overall approach takes $O(n\log \lambda)$.
Table \ref{tab:comp} summarizes the two approaches mentioned above.

\begin{table*}
\small
    \caption{Our approaches to bounding shuffled check-in mechanism with a generic $(\eps_0,\delta_0)$-LDP randomizer.
    The conversion from DP to RDP (using Lemma \ref{lm:dprdp}) can be performed in two different steps leading to different time complexities as shown in the Table.
    Dependence of the time complexity on other factors is suppressed.}
    \label{tab:comp}
\begin{center}

    \begin{tabular}{ccccc}
        \toprule
        
         &Shuffling&  Subsampling& Convert to RDP from & Time complexity \\
                \midrule
      Approach 1& Use \citep{feldman2022hiding} (DP) &Use \citep{wang2019subsampled} (RDP)& \citep{feldman2022hiding}& $O(n\lambda\log \lambda)$ \\
      Approach 2& Use \citep{feldman2022hiding} (DP)& Use \citep{balle2018privacy} (DP)& \citep{feldman2022hiding}+\citep{balle2018privacy}& $O(n\log \lambda)$ \\
        \bottomrule
    \end{tabular}

\end{center}
\end{table*}

There is no a priori which of these two approaches work better.
Nevertheless, either or both of them perform better than the first-cut/baseline approach given in Section \ref{subsec:baseline}, as they bypass the requirement of conservatively ``fixing" the check-in number to evaluate the amplification effects, resulting in tighter privacy guarantees.
Some numerical evaluations are given in Table \ref{tab:adp}.

\begin{wraptable}{r}{6cm}
\small
\caption{Comparisons of privacy amplification between the baseline approach (Section \ref{subsec:baseline}) and the approaches introduced in Section \ref{subsec:sci_adp} (Approach 1: shuffling conversion, Approach 2: subsampled shuffling conversion).
The number of composition is fixed to 100, dataset size $n=10^4$, $\delta=\delta_1=1/n$, $\delta_0=1/\delta^2$ (parameters in Equation \ref{eq:shuffdelta}).}
\label{tab:adp}
\begin{tabular}{@{}lllll@{}}
\toprule
$\epsilon_0$         & $\gamma$  & Baseline & Approach 1     & Approach 2     \\ \midrule
\multirow{2}{*}{2} & $10^{-2}$ & 0.096    & 0.17           & \textbf{0.049} \\
                   & $10^{-3}$ & 0.013    & \textbf{0.012} & 0.024          \\ \midrule
\multirow{2}{*}{8} & $10^{-2}$ & 9.52     & 8.57           & \textbf{8.18}  \\
                   & $10^{-3}$ & 3.86     & 1.78           & \textbf{1.58}  \\ \bottomrule
\end{tabular}
\end{wraptable}


\section{Lower bound for Gaussian Mechanism}
\label{sec:lower}
The results for shuffled check-in given in Section \ref{subsec:sci_adp} are still somewhat unsatisfactory especially when the LDP randomization mechanism is a Gaussian mechansim.
This is because the Gaussian mechanism satisfies multiple values of $(\epsilon,\delta)$;
more precisely, the Gaussian mechanism is parameterized by $\sigma$, and its \textit{privacy profile} \citep{balle2018privacy}, which writes $\delta$ as a function relating these parameters is $\delta(\epsilon) \leq 1.25e^{-\epsilon^2\sigma^2/2}$ (assuming the sensitivity to be 1). 
Recall that in Section \ref{subsec:sci_adp}, we fixed a particular set of $(\epsilon_0, \delta_0)$ to calculate the privacy amplification of shuffling.
Shuffling introduces another parameter $\delta$, and with composition there is a wide selection of these parameters; picking a specific one may not give the full picture of the mechanism. 
Hence, the results may not be optimal.

The aim of this section is to address this shortcoming by formulating a lower bound to characterize the tightness of the bounds derived in the previous section.
This is done by characterizing the Gaussian mechanism with its parameter, $\sigma$ (and under certain assumptions on the adjacent databases), instead of the privacy parameters.


The lower bound for the R{\'e}nyi divergence (of order $\lambda$) of the Gaussian mechanism in the shuffle model is given below.
\begin{theorem}[Shuffle Gaussian RDP lower bound]
\label{th:shuffgauss}
Consider adjacent databases $D,D' \in \mathbb{R}^n$ with one-dimensional data instances $D = (0, \dotsc, 0)$, $D'= (1,0, \dotsc, 0)$. 
The R{\'e}nyi divergence (of order $\lambda$) of the shuffle Gaussian mechanism with variance $\sigma^2$ for neighboring datasets with $n$ instances is
\begin{align}
     D_{\lambda}(\calM(D)||\calM(D')) = 
    \frac{1}{\lambda-1}\log\left(\frac{e^{-\lambda/2\sigma^2}}{n^\lambda}\sum_{\substack{k_1+\dotsc+k_n=\lambda;\\k_1,\dotsc,k_n\geq 0}}\binom{\lambda}{k_1,\dotsc,k_n}e^{\sum_{i=1}^nk_i^2/2\sigma^2}\right)\label{eq:shuffgauss}
\end{align}
\end{theorem}
\begin{remark}
We prove the lower bound for Gaussian mechanism by studying a specific pair of neighboring datasets perturbed with Gaussian noises.
As the RDP privacy bound is defined by taking the supremum over all neighboring datasets, the lower bound is implied by the fact that the RDP privacy bound for the Gaussian mechanism is at least as the bound given in Equation \ref{eq:shuffgauss}.
\end{remark}
\begin{corollary}[``Upper bound" of Equation \ref{eq:shuffgauss}]
\label{co:upper}
Equation \ref{eq:shuffgauss} is upper-bounded by $\lambda/2\sigma^2$.
\end{corollary}
This corollary states that the upper bound of Equation \ref{eq:shuffgauss} (not to be confused as the upper bound given in Section \ref{sec:sci}) is equal to the Gaussian RDP without shuffling \citep{mironov2017renyi}.
It is interesting to note that this is a strictly tight upper bound compared with previous studies.
In earlier analyses of shuffling \citep{erlingsson2019amplification,balle2020privacy}, privacy amplification occurs at limited parameter region.
We have instead proved in the above corollary, at least for the lower bound, that its value at its largest is analytically and strictly the same as the one without shuffling, and our result is valid for all parameter region.

\noindent\textbf{Numerical computation of the lower bound.}
Evaluating the shuffle Gaussian RDP numerically is non-trivial, as $n$ can be very large ($\gtrsim 10^5$) in distributed learning scenarios and the order of RDP, $\lambda$ to be evaluated has to be large especially when $\delta$ is required to be small.
Calculating the multinomial coefficients can be prohibitively expensive as a result.
In Appendix \ref{app:algo}, we describe our numerical recipes for calculating the RDP, as well as its numerical comparison with respect to the upper bound.

\section{Discussions and Experiments}
\label{sec:discussion}
In this section, we first consider a practical aspect of shuffled check-in in real-life deployment, i.e., modelling dropouts.
Then, we discuss the implication of the techniques developed earlier on applications beyond shuffled check-in (distributed check-in and DP-SGD).
Finally, we present details of the shuffled check-in experiments advertised in Section \ref{sec:introduction}.
In Appendix \ref{app:exp}, we also discuss alternative approaches to studying shuffled check-in, and argue that our approach works the best in practice.

\noindent\textbf{Modelling dropouts.}
In our protocol, each client carries a $p$-biased coin such that she would decide to participate in the training only when head is returned.
Even after a client decides to participate, she may \textit{drop out} due to various reasons, such as battery outage and network disconnection.
Assuming that clients have a certain constant and independent probability of dropping out, $p'$, the \textit{effective} check-in rate is then $\gamma = p(1-p')$.
A practitioner can choose to determine the empirical $\gamma$ (by monitoring the number of client checking in each round), or conservatively use $p$ as the check-in rate to yield a valid privacy upper bound (as smaller check-in rate leads to larger amplification).
These henceforth allow for robust modelling of dropouts. 

Note that assuming a constant $p'$ also means that the analyzer is using a ``flat" prior, i.e., she does not have prior knowledge about the availability of the clients.
A relaxation of this assumption is to assume instead that each user has constant but different effective check-in rates (user may also have the freedom to adjust her check-in rate). 
In this case, one can generalize Equation \ref{eq:main} to accommodate such changes in check-in rate for different users, albeit the evaluation becomes much more complex.

\noindent\textbf{Distributed check-in.}
Our techniques introduced in Section \ref{sec:sci} can also be applied to secure aggregation/distributed DP protocols that do not explicitly sample users. \footnote{This removes the reliance on the server to sample the users explicitly, which in turn reduces the trust requirement on server, as emphasized in Introduction.
Note that recently it is shown that distributed DP protocols can be vulnerable to privacy leakage attacks when the server is fully trusted for orchestrating the whole protocol \citep{boenisch2023federated}.}
Here, each user checks in with certain probability, and sends the encrypted (randomized) report to the server such that only the aggregated (randomized) result is exposed to the server, as in secure aggregation protocols.
We call this class of protocols distributed check-in.
Let us describe the scenario in consideration in detail next.

As in secure aggregation protocols, we consider each client randomizing and encrypting her data instance $x_i$ of dimension $d$ before sending it to the server.
We specifically consider the addition of isotropic Gaussian noise of variance $\sigma^2$: $\tilde{x}_i=x_i+\mathcal{N}(0,\sigma^2I_d)$. \footnote{Alternatively, one can perform randomization using discrete Gaussian \citep{kairouz2021distributed} or Skellam mechanism \citep{agarwal2021skellam,DBLP:journals/pvldb/BaoZXYOTA22}.
These mechanisms have been shown to achieve similar utility as the vanilla Gaussian mechanism.}
Here, w.l.o.g., the global sensitivity of $x_i$ is assumed to be 1.
Note also that the noise added to the aggregated output depends on the number of check-in, which is a variable.
As a consequence, Poisson subsampling privacy analyses available in the literature (such as \citep{zhu2019poission} which applies only to the case where a fixed amount of noise is added to the output) cannot be directly applied to this scenario, and we do not realize any previous work that takes this into account.

The distributed check-in Gaussian RDP can be calculated rather straightforwardly using the tools introduced in Section \ref{sec:sci}. 

\begin{theorem}[Distributed check-in Gaussian RDP]
\label{th:ssg}
The RDP of distributed check-in Gaussian mechanism is
$\epsilon(\lambda) \leq \frac{1}{\lambda-1} \log \left(\sum_{k=1}^{n} \binom{n}{k} \gamma^{k} (1-\gamma)^{n-k} e^{(\lambda-1)\epsilon^{\rm SG}_{k}}\right)$, where $\epsilon^{\rm SG}$ is
    $\eps^{\rm SG}_{m}(\lambda) \leq \frac{1}{\lambda -1}
\log(1 + (\frac{m}{n})^2 \binom{\lambda}{2} \min\left\{4(e^{4/(m\sigma^2)}-1), 2 e^{4/(m\sigma^2)}\right\} +\sum_{j=3}^{\lambda}2(m/n)^j \binom{\lambda}{j} e^{2j(j-1)/(m\sigma^2)})$.
\end{theorem}
Due to space constraints, we relegate the comparison of distributed check-in with subsampling without replacement to Appendix \ref{app:exp}, where we show that distributed check-in can be more private than subsampling without replacement.

\noindent\textbf{DP-SGD.}
The centralized learning paradigm has a discrepancy in privacy accounting of DP-SGD: as noted in \citep{de2022unlocking}, open-source implementations of DP-SGD typically take mini-batches (of fixed size) of samples in a shuffled scheme, but the privacy accountant implemented often ignores the shuffling effect.
One way to resolve this is substituting the central-DP version of DP-SGD with the shuffle model.
Using the shuffle model, a weak trust model, inevitably leads to a drop in utility compared to the highly trusted central-DP model.
To estimate optimistically the drop in utility, we can utilize the lower bound of shuffle Gaussian as given in Theorem \ref{th:shuffgauss}.  

Formally, we consider the mechanism where a fixed number of data instances, $m$, is sampled randomly from $n$ data instances to participate in training.
Each data instance is randomized with Gaussian noises before being shuffled.
We call this overall procedure the subsampled (without replacement) shuffle \citep{girgis2021shuffled} Gaussian mechanism.
The RDP of the subsampled shuffle Gaussian mechanism can be calculated rather straightforwardly combining Equation \ref{eq:shuffgauss} with the generic subsampling theorem derived in \citep{wang2019subsampled}.
It is given by the following.
\begin{theorem}[Subsampled Shuffle Gaussian RDP]
\label{th:subshuff}
Let $n$ be the total number of users, $m$ be the number of subsampled users in each round of training, and $\gamma$ be the subsampling rate, $\gamma = m/n$. Also let $\eps^{\rm SG}_m(\lambda)$ be the shuffle Gaussian RDP given in Theorem \ref{th:shuffgauss}.
The subsampled shuffle Gaussian mechanism satisfies $(\lambda,\eps^{\rm SSG}_{\gamma,m}(\lambda))$-RDP, where 
\begin{align*}
    \eps^{\rm SSG}_{\gamma,m}(\lambda) &\leq \frac{1}{\lambda -1}
\log(1 + \gamma^2 \binom{\lambda}{2} \min\left\{4(e^{\eps^{\rm SG}_m(2)}-1), 2 e^{\eps^{\rm SG}_m(2)}\right\} +\sum_{j=3}^{\lambda}2\gamma^j \binom{\lambda}{j} e^{(j-1)\eps^{\rm SG}_m(j)}
)
\end{align*}
\end{theorem}
We relegate the numerical comparison with moments accountant to Appendix \ref{app:exp}.

\noindent \textbf{Experiments.}
We consider the task of learning under the distributed setting, where there are $n$ clients each holding a data record $x_i$ for $i \in [n]$ (this suffices for our purposes; \textit{user} DP is guaranteed instead of \textit{sample-level} DP when each user holds more than a data record).
The purpose of the system is to train a model with parameter $\theta \in \Theta$ by minimizing a certain loss function $l:\calD^n\times\Theta \to \mathbb{R}_+$ via stochastic gradient descent (SGD), while providing clients with formal privacy guarantees. 

At the beginning of communication round $t$, each client downloads the model $\theta_t$ and flips a biased coin to decide whether to participate in the computation. 
Clients decided to participate calculate the gradient, apply local randomizer to it (e.g., clip the update norm and add Gaussian noise), and send it (encrypted) to the trusted shuffler.
The shuffler permutes randomly all the received messages before forwarding them to the untrusted aggregator.
Then, the aggregator updates the model parameters to $\theta_{t+1}$ using the shuffled messages.
We leave the details of the setup and the full result to Appendix \ref{app:exp}.

We adapt DP-SGD \citep{abadi2016deep,bassily2014private,song2013stochastic} to distributed learning.
Here, the Gaussian mechanism is applied to each client's clipped gradient to achieve $(\eps_0,\delta_0)$-LDP.
More precisely, each sample's gradient (of dimension $d$) is clipped, $g\leftarrow g \cdot{\rm min}\left\{1, \frac{C}{|g|_2}\right\}$, and perturbed with Gaussian noise, $\tilde{g}\to g + \mathcal{N}(0,C^2\sigma^2I_d)$, where $C$ and $\sigma$ are the clipping size and noise multiplier respectively.
The aggregated gradients are then used to update the model at each iteration: $\sum_{i=1}^{B}\left\{\tilde{g}_i +  \mathcal{N}(0, C^2\sigma^2I_d)\right\}= \sum_{i=1}^{B}\tilde{g}_i +  \mathcal{N}(0, BC^2\sigma^2I_d)$, with $B$ the batch size.
Let us remark that, despite its wide usage (under the central DP setting) and simplicity, we do not realize any existing work using the Gaussian mechanism in the LDP/shuffle model literature, possibly due to the difficulty of evaluating an $(\eps_0,\delta_0)$-LDP randomizer.
Hence, we believe that our evaluation is of independent interest as well. 


In Figure \ref{fig:exp}, we plot the accuracy of the models on test dataset with respect to the spent privacy, $\epsilon$, for the MNIST dataset \citep{lecun1998gradient}.
We set the check-in rate $\gamma = 10^{-3}$, $\sigma =0.86$.
100 epochs are run, and the hyperparameters are (optimized via grid search): clipping size $C = 0.05$, learning rate 0.1.
The final $\epsilon$ is obtained by fixing $\delta$ to be the reciprocal of the dataset size.
It is shown that our approaches obtain significantly tighter privacy accounting over the baseline approach (Section \ref{subsec:baseline}).
In Appendix \ref{app:exp}, we also present the lower bounds and the results of training with CIFAR10 \citep{krizhevsky2009learning}.

\section{Conclusion}
This paper attempts to give a privacy analysis of distributed learning under a more security/trust-friendly protocol by taking client ``autonomy" into account, where clients ``check in" without interference of the server.
The trust on the server is further reduced by utilizing the shuffler, an intermediate trust model.
Under this model, we have presented upper and lower bounds characterizing the privacy guarantees of such a protocol, which not only improves over previous results, but also finds implication in other applications (specifically secure aggregation and DP-SGD).

There remain some open problems:
Privacy analysis-wise, although we have obtained a lower bound on the shuffle Gaussian mechanism by considering a specific adjacent datasets, it is still unknown how to acquire the worst-case (taking the supremum over the adjacent datasets) bound tailored to the Gaussian mechanism. 
Protocol-wise, we still rely on the server to initiate FL, determine the number of communication rounds, etc..
A more decentralized approach would be letting the clients collaboratively decide and launch these tasks via, e.g., a distributed ledger \citep{kairouz2021advances}.
We hope that this work can spur the study of privacy amplification within the research community, focusing on minimizing trust assumptions as a step towards practical and deployable distributed protocols. 



\bibliographystyle{plain}
\bibliography{ref}

\clearpage
\appendix
\section{More on Preliminaries}
\label{app:pre}
In this section, we provide additional preliminaries to complement our discussion in Section \ref{subsec:related} and \ref{sec:pre}.

\textbf{DP realizations in FL.}
The popular protocol considered in the literature for achieving central DP is \textit{secure aggregation} \citep{bonawitz2017practical,bell2020secure}. \footnote{This protocol is also said to achieve distributed DP \citep{kairouz2021advances}.}
The implementation proposed in \citep{bonawitz2017practical} has communication complexity of $O(n^2)$ with each user performing $O(n^2)$ computation, where $n$ is the total number of user.
In contrast, the shuffler implementation (alternative implementation include the use of mix-nets \citep{chaum1981untraceable} and peer-to-peer protocols \citep{liew2022network}) using trusted hardware presented in \citep{bittau2017prochlo} requires user to compute only once, and the communication complexity scales linearly with $n$.
Theoretically, the shuffle model lies strictly between the central and local models of DP \citep{cheu2019distributed}.
Moreover, the shuffler is only required to perform the relatively less sophisticated task of masking the data origin (shuffling can even be performed on encrypted data by, e.g., removing only metadata or identifiers, such that sensitive contents are never exposed to the third party/shuffler).
In theory and practice, the shuffle model is a weaker trust model compared to central DP.

\noindent\textbf{Clients' autonomy.}
\citep{balle2020privacy} has considered the possibility of client making independent decision to participate in distributed learning.
However, the scheme relies on a highly trusted ochestrating server (to store the user indices securely), in contrast to our intermediate-trust shuffle model.

\begin{lemma}[Adaptive composition of RDP \citep{mironov2017renyi}]
\label{lm:rdpcompose}
Given two mechanisms $\calM_1,\calM_2$ taking $D\in\calD$ as input that are $(\lambda,\eps_1)$,$(\lambda,\eps_2)$-RDP respectively, the adaptive composition of $\calM_1$ and $\calM_2$ satisfies $(\lambda,\eps_1+\eps_2)$-RDP.
\end{lemma}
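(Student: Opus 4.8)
The plan is to reduce the claim to a single inequality on the moment-generating form of the Rényi divergence and then exploit the chain-rule factorization of the composed mechanism's output law. Writing the composition adaptively as $\calM(D) = (\calM_1(D), \calM_2(D, y_1))$, where $y_1$ denotes the first output, I would let $p_1, q_1$ be the densities of $\calM_1(D), \calM_1(D')$ and let $p_2(\cdot \mid y_1), q_2(\cdot \mid y_1)$ be the densities of $\calM_2$ on $D, D'$ given that first output. By Definition~\ref{def:RDP}, the target $(\lambda, \eps_1+\eps_2)$-RDP bound is equivalent to showing $\bbE_{\phi \sim \calM(D')}[(\calM(D)(\phi)/\calM(D')(\phi))^{\lambda}] \leq e^{(\lambda-1)(\eps_1+\eps_2)}$, since applying $\frac{1}{\lambda-1}\log(\cdot)$ to both sides recovers the stated divergence inequality.

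First I would expand this expectation over the pair $\phi = (y_1, y_2)$ and substitute the product form $\calM(D)(y_1,y_2) = p_1(y_1)\, p_2(y_2 \mid y_1)$, and likewise for $D'$. This rewrites the whole expectation as an iterated integral in which the $\lambda$-th power of the likelihood ratio splits multiplicatively into an outer factor depending only on $y_1$ and an inner factor depending on $y_2$ given $y_1$. Interchanging to iterated integration is justified by nonnegativity of the integrand via Tonelli's theorem.

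Next I would evaluate the inner integral: for each fixed $y_1$ it equals exactly $e^{(\lambda-1) D_{\lambda}(\calM_2(D,y_1)\,\|\,\calM_2(D',y_1))}$, which by the $(\lambda,\eps_2)$-RDP hypothesis on $\calM_2$ is bounded by $e^{(\lambda-1)\eps_2}$ \emph{uniformly} in $y_1$. Pulling this constant out of the outer integral leaves $e^{(\lambda-1)\eps_2}$ times $\bbE_{y_1 \sim \calM_1(D')}[(p_1(y_1)/q_1(y_1))^{\lambda}] = e^{(\lambda-1)D_{\lambda}(\calM_1(D)\,\|\,\calM_1(D'))}$, which the $(\lambda,\eps_1)$-RDP hypothesis on $\calM_1$ bounds by $e^{(\lambda-1)\eps_1}$. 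Multiplying the two factors yields $e^{(\lambda-1)(\eps_1+\eps_2)}$, and taking the logarithm finishes the argument.

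The main obstacle is the adaptivity: the bound on the inner integral must hold for \emph{every} realization $y_1$ of the first output, not merely in expectation, so I would need the $(\lambda,\eps_2)$-RDP guarantee of $\calM_2$ to hold uniformly across all auxiliary inputs. This uniformity is exactly what the adaptive RDP definition supplies, so no additional assumption is required; the remainder is the bookkeeping of the factorization. The result extends to finitely many mechanisms by induction on the number of rounds.
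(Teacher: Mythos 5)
Your argument is correct and is essentially the standard proof of adaptive RDP composition from the cited reference \cite{mironov2017renyi}: factor the joint law via the chain rule, bound the inner (conditional) moment uniformly over the first output using the RDP guarantee of $\calM_2$, then bound the outer moment using the RDP guarantee of $\calM_1$. The paper itself states this lemma without proof, citing Mironov, so there is nothing to compare beyond noting that your write-up (including the observation that adaptivity requires the $\calM_2$ bound to hold for every realization of $y_1$, not just on average) matches the canonical argument.
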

While the privacy accounting involving composition is preferably done in terms of RDP, we often need to convert the RDP notion back to the approximate DP notion in the final step. This is achieved with the following. 
\begin{lemma}[RDP-to-DP conversion~\citep{canonne2020discrete,balle2020hypothesis}]\label{lem:RDP_DP} 
\label{lm:rdpdp}
If a mechanism $\calM$ is $\left(\lambda,\epsilon\left(\lambda\right)\right)$-RDP, $\calM$ is also $\left(\epsilon,\delta\right)$-DP, where $1\geq\delta \geq0$ is arbitrary and $\eps$ is given by
\begin{equation}
\label{eq:rdpdp}
\epsilon = \min_{\lambda} \(\epsilon\left(\lambda\right)+\frac{\log\left(1/\delta\right)+\left(\lambda-1\right)\log\left(1-1/\lambda\right)-\log\left(\lambda\right)}{\lambda-1}\).
\end{equation}
\end{lemma}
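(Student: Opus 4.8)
The plan is to reduce the statement to a bound on the hockey-stick (advanced) divergence and then convert the RDP moment bound into that divergence through a single sharp pointwise inequality. Fix adjacent $D,D'\in\calD$ and write $P=\calM(D)$, $Q=\calM(D')$ with likelihood ratio $f(\phi)=P(\phi)/Q(\phi)$. Recall that $(\eps,\delta)$-DP is equivalent to $\sup_{S\subseteq\calS}\bigl(\Pr[P\in S]-e^{\eps}\Pr[Q\in S]\bigr)\le\delta$, and that this supremum is attained on $S^\star=\{\phi:f(\phi)>e^{\eps}\}$. Hence the whole task collapses to showing $\bbE_{\phi\sim Q}\bigl[\max(f(\phi)-e^{\eps},0)\bigr]\le\delta$, i.e.\ that the hockey-stick divergence at level $e^{\eps}$ is at most $\delta$.

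The key step is a pointwise inequality that trades the truncated difference $\max(f-c,0)$ for the $\lambda$-th moment that RDP actually controls. I would establish that for every $c>0$ and every $\lambda>1$,
\[
\max(f-c,0)\le \frac{(\lambda-1)^{\lambda-1}}{\lambda^{\lambda}\,c^{\lambda-1}}\,f^{\lambda}\qquad\text{for all }f\ge 0 .
\]
This is proved by maximizing $g(f)=(f-c)f^{-\lambda}$ over $f\ge c$ (for $f<c$ the left side is $0$ and the bound is trivial): the first-order condition $g'(f)=0$ gives $f^\star=c\lambda/(\lambda-1)$, and substituting back yields exactly the displayed constant. Taking expectations under $Q$ and invoking the RDP hypothesis, which by \Cref{def:RDP} gives $\bbE_{\phi\sim Q}[f^{\lambda}]=e^{(\lambda-1)D_{\lambda}(P\|Q)}\le e^{(\lambda-1)\eps(\lambda)}$, produces
\[
\bbE_{\phi\sim Q}\bigl[\max(f-e^{\eps},0)\bigr]\le \frac{(\lambda-1)^{\lambda-1}}{\lambda^{\lambda}}\,e^{(\lambda-1)(\eps(\lambda)-\eps)} ,
\]
after setting $c=e^{\eps}$ so that $c^{\lambda-1}=e^{(\lambda-1)\eps}$.

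Finally I would require the right-hand side to be at most $\delta$, take logarithms, and solve the resulting linear inequality for $\eps$; the terms $(\lambda-1)\log(\lambda-1)-\lambda\log\lambda$ rearrange precisely into $(\lambda-1)\log(1-1/\lambda)-\log\lambda$, recovering $\eps=\eps(\lambda)+\bigl(\log(1/\delta)+(\lambda-1)\log(1-1/\lambda)-\log\lambda\bigr)/(\lambda-1)$. Since this argument is valid for every order $\lambda\in(1,\infty)$ at which the RDP guarantee holds, each admissible $\lambda$ certifies a feasible pair $(\eps,\delta)$, and taking the minimum over $\lambda$ is legitimate and gives the stated conversion. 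The one genuinely delicate point is the pointwise inequality: the constant must be the \emph{exact} maximizer of $g$, because any slack there would erase the refinement $(\lambda-1)\log(1-1/\lambda)-\log\lambda$ that distinguishes this bound from the cruder $\eps(\lambda)+\log(1/\delta)/(\lambda-1)$ conversion. Everything after that inequality is elementary bookkeeping.
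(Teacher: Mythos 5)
Your proof is correct. Note first that the paper does not prove this lemma at all: it is imported verbatim from Canonne--Kamath--Steinke and Balle et al., so there is no in-paper argument to compare against. What you have written is a sound, self-contained derivation: reducing $(\eps,\delta)$-DP to the hockey-stick bound $\bbE_{\phi\sim Q}[\max(f(\phi)-e^{\eps},0)]\le\delta$, proving the pointwise inequality $\max(f-c,0)\le \frac{(\lambda-1)^{\lambda-1}}{\lambda^{\lambda}c^{\lambda-1}}f^{\lambda}$ by maximizing $(f-c)f^{-\lambda}$ at $f^{\star}=c\lambda/(\lambda-1)$, and then applying the RDP moment bound $\bbE_{Q}[f^{\lambda}]\le e^{(\lambda-1)\eps(\lambda)}$; the algebra recovering $(\lambda-1)\log(1-1/\lambda)-\log\lambda$ checks out. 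The original proof in the cited sources reaches the same constant by a two-step route (Markov's inequality on $\Pr_P[f>c]$ via the $(\lambda-1)$-th moment under $P$, followed by optimizing the threshold $c$, which lands at the same $c=e^{\eps}\lambda/(\lambda-1)$); your single pointwise inequality packages those two steps into one and is arguably cleaner, and you are right that the exactness of the maximizer is what preserves the improvement over the naive $\eps(\lambda)+\log(1/\delta)/(\lambda-1)$ conversion. Two minor housekeeping points: finiteness of $\eps(\lambda)$ for $\lambda>1$ is what licenses writing $f=P/Q$ (it forces absolute continuity), and the condition ``$1<\delta<0$'' in the lemma statement is a typo in the paper for $0<\delta<1$, not an issue with your argument.
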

Typically, we solve for $\epsilon$ given $\delta$ (set to be cryptographically small; we set it to be the reciprocal of dataset size throughout the paper).
To obtain the optimal $\epsilon$, we use a bisection algorithm which takes time $O(\log(\lambda))$ \citep{wang2019subsampled}.


We next introduce $\zeta$-ternary-$|\chi|^{\lambda}$-DP, which is useful when discussing subsampling. 
\begin{definition}[$\zeta$-ternary-$|\chi|^{\lambda}$-DP \citep{wang2019subsampled}]
\label{def:ternary}
A randomized mechanism $\calM:\calX^n\to\calY$ is said to have $\zeta$-ternary-$|\chi|^{\lambda}$-DP, if for any triple of mutually adjacent datasets $\calD,\calD',\calD''\in\calX^n$ ({\em i.e.}, they mutually differ in the same location), the ternary-$|\chi|^{\lambda}$ divergence of $\calM(\calD),\calM(\calD'),\calM(\calD')$ is upper-bounded by $\zeta^{\lambda}(\lambda)$ for all $\lambda\geq 1$ (where $\zeta$ is a function from $\bbR^+$ to $\bbR^+$), {\em i.e.},
\begin{align*}
&D_{|\chi|^{\lambda}}\left(\calM(\calD),\calM(\calD')||\calM(\calD'')\right)  
 \\ 
&:= \mathbb{E}_{\calM(\calD'')}\left[\left|\frac{\calM(\calD)-\calM(\calD')}{\calM(\calD'')}\right|^{\lambda}\right]
\leq \zeta^{\lambda}(\lambda).
\end{align*}
\end{definition}
Moreover, $\zeta(\lambda)$ can be related back to RDP as follows \citep{wang2019subsampled}:
\begin{align*}
\zeta(2)  &\leq 4(e^{\epsilon(2)} - 1),\\
\zeta(j) &\leq e^{(j-1)\epsilon(j)}\text{min}\{2,(e^{\epsilon(\infty)-1})^j\}\;\text{for }j>2
\end{align*}

\noindent\textbf{Privacy amplification via shuffling.}
We use the results of \citep{feldman2022hiding} to calculate privacy amplification due to shuffling.
More precisely, given $n$ reports with each processed with any $(\epsilon_0, \delta_0)$-LDP randomizer and undergone shuffling, and $\epsilon_0 \leq \log(\frac{n}{16 \log(2/\delta_1)})$, the privacy amplification is (see Theorem 3.8 of \citep{feldman2022hiding} for the exact statement) 
\begin{align}
\epsilon &= \log \left( 1 + \left( \frac{e^{\epsilon_0}-1}{e^{\epsilon_0}+1} \right) \left( \frac{8\sqrt{e^{\epsilon_0} \log{\frac{4}{\delta}}}}{\sqrt{n}}+ \frac{8e^{\epsilon_0}}{n} \right) \right),\\
\delta &= \delta_1 + (e^{\epsilon_0}+1)(1+e^{-\epsilon_0}/2)n\delta_0
\label{eq:shuffdelta}
\end{align}

\section{Proofs}
\label{app:proof}
\subsection{Proof of Theorem \ref{th:naive}}
The shuffled check-in mechanism as defined in Equation \ref{eq:bern} has its total number of instances following the binomial distribution with parameters $n$ and $\gamma$.
At each round, with probability $1-\beta$, the total number of check-ins is fewer than $l$.
Note that the subsampling (without replacement) of $l$ instances from $n$ of a $(\epsilon',\delta')$-DP mechanism leads to privacy amplification of the form \citep{balle2018privacy}
\begin{align}
    \epsilon'' = \log(1 + \frac{l}{n}(e^{\epsilon'}-1))
    ,\; \delta''= \frac{l}{n} \delta', 
\end{align}
while the shuffling of $l$ $(\epsilon_0,\delta_0)$-LDP randomizers leads to privacy amplification of the form
\begin{align}
\epsilon' &= \log \left( 1 + \left( \frac{e^{\epsilon_0}-1}{e^{\epsilon_0}+1} \right) \left( \frac{8\sqrt{e^{\epsilon_0} \log{\frac{4}{\delta}}}}{\sqrt{l}}+ \frac{8e^{\epsilon_0}}{l} \right) \right),\\
\delta' &= \delta + (e^{\epsilon_0}+1)(1+e^{-\epsilon_0}/2)l\delta_0
\end{align}
Combining the above results, the subsampled shuffling privacy amplification for $l$ instances can be parameterized as
\begin{align}
    \epsilon_l &= \log \left( 1 + \left( \frac{e^{\epsilon_0}-1}{e^{\epsilon_0}} \right) \left( \frac{8\sqrt{le^{\epsilon_0} \log{\frac{4}{\delta}}}}{n}+ \frac{8e^{\epsilon_0}}{n} \right) \right) \label{eq:conserv1} \\
   \delta_l &= \frac{l}{n} \left( \delta + (e^{\epsilon_l}+1)(1+e^{-\epsilon_0}/2)l\delta_0\right) \label{eq:conserv2} 
\end{align}
Note that both equations are monotonically increasing with respect to $l$.
Hence, we can say that with probability $1-\beta$, there are fewer than $l$ check-ins, and the privacy parameters of shuffled check-in satisfy (conservatively) Equations \ref{eq:conserv1} and \ref{eq:conserv2}.

To finish the proof, we convert the high-probability bound on approximate DP to the usual approximate DP.
We first let $A(x_{1:n}) \in Z$ be the shuffled check-in mechanism, $x_{1:n}$ and $x'_{1:n}$ being the neighboring datasets.
Let also $A^U(x_{1:n}) \in Z$ be the same mechanism but conditioned on $U$ samples being subsampled.
Since each sample is sampled with equal probability, conditioned on $U$, This mechanism can be viewed as subsampling without replacement with probability $U/n$.
Let $\beta$ be the probability such that $Pr(U \geq u) \leq \beta$. \footnote{Here, instead of using the generic and two-sided inequality like Bernstein inequality as in previous work, we calculate numerically the one-sided inequality to obtain a tighter bound.} Then,
\begin{align*}
    Pr(A(x_{1:n}) &\in Z) = \sum_{l\in[n]} Pr(A^U(x_{1:n}) \in Z|U=l) Pr(U=l) \\
&\leq  \sum_{l\in[u]} Pr(A^U(x_{1:n}) \in Z|U=l) Pr(U=l) + \beta \\
&\leq \sum_{l\in[u]} (e^{\epsilon_l}Pr(A^U(x'_{1:n}) \in Z|U=l) + \delta_l)Pr(U=l) + \beta \\
&\leq  e^{\epsilon_u}Pr(A^U(x'_{1:n}) \in Z|U=u) +\delta_u + \beta \\
&\leq e^{\epsilon_u}Pr(A(x'_{1:n}) \in Z)+\delta_u + \beta
\end{align*}
This means that when the shuffled check-in mechanism satisfies $(\epsilon,\delta)$-DP with probability $1-\beta$, it also satisfies  $(\epsilon,\delta+\beta)$-DP (with probability 1).
Applying this to Equations \ref{eq:conserv1} and \ref{eq:conserv2}, we obtain the desired results. 

\subsection{Proof of Theorem \ref{th:main}}

Let $p$ and $q$ be the output probability distributions of our check-in mechanism on adjacent datasets $\calD, \calD'$.
Let the index of the differing data instance be $1$.
The key to our analysis is the decomposition of the mechanism output probability distribution to mixtures of probability distribution at multiple levels.
We first note that the random variables governed by the probability distribution may be divided to two types: algorithm outputs and ``hidden" outputs.
The former ones are those to be integrated or summed over explicitly in the RDP expression.
The latter random variables will be explained in detail later.
For our case, the algorithm outputs are the number of check-in (we denote by $k$), and also the content of the data instance, denoted by $\theta$.

We begin by decomposing the distribution's dependency on $k$.
As each instance checks in with a probability of $\gamma$, the marginal distribution over random variables except $k$ is a binomial distribution:
\begin{align*}
    p = p_k \text{ with probability } \binom{n}{k}\gamma^{k}(1-\gamma)^{n-k} \text{ for }k \in [n]. \\
q = q_k \text{ with probability } \binom{n}{k}\gamma^{k}(1-\gamma)^{n-k} \text{ for }k \in [n].
\end{align*}
where $p_k,q_k$ are distributions acted on by the base mechanism (i.e., shuffling) of which the number of check-ins is fixed to be $k$. 

To calculate $ \mathbb{E}_{q}\left[\left(\frac{p}{q}\right)^{\lambda}\right] $, we decompose the expression with respect to $k$ as follows:
\begin{align*}
    \mathbb{E}_{q}\left[\left(\frac{p}{q}\right)^{\lambda}\right] &= \int d\theta\sum_k \binom{n}{k}\gamma^{k}(1-\gamma)^{n-k} \left(\frac{p_k}{q_k}\right)^{\lambda}q_k \\
    &= \sum_k \binom{n}{k}\gamma^{k}(1-\gamma)^{n-k} \mathbb{E}_{q_k}\left[\left(\frac{p_k}{q_k}\right)^{\lambda}\right]
\end{align*}
We have henceforth reduced the problem of calculating $ \mathbb{E}_{q}\left[\left(\frac{p}{q}\right)^{\lambda}\right]$ to calculating $\mathbb{E}_{q_k}\left[\left(\frac{p_k}{q_k}\right)^{\lambda}\right]$, which is our focus next.

Let $J$ be a random subset of size $k$.
Using $J$, $p_k,q_k$ can further be decomposed as
\begin{align*}
p_k &= \mathbb{E}_J \left[p_k(\cdot| J)\right]= \frac{1}{\binom{n}{k}}\sum_{J}p_k(\cdot| J), \\
q_k &=  \mathbb{E}_J \left[q_k(\cdot| J)\right]= \frac{1}{\binom{n}{k}}\sum_{J}q_k(\cdot| J).
\end{align*}
$J$ is a ``hidden" variable that does not appear explicitly in the RDP expression, but is important at achieving amplification similar to amplification via subsampling.
This can be seen by first letting $E$ to be the subset containing the 1st element ($E^c$ being the complement subset).
Also, note that there are $\binom{n-1}{k-1}$ in $E$ and $\binom{n-1}{k}$ components in $E^c$. 
Then, we decompose $p_k$ as
\begin{align*}
p_k = \frac{1}{\binom{n}{k}}\sum_{J}p_k(\cdot| J) &= \frac{k/n}{\binom{n-1}{k-1}}\sum_{J\subset E}p_k(\cdot| J) + \frac{1-k/n}{\binom{n-1}{k}}\sum_{J\subset E^c}p_k(\cdot| J) \\
& = \frac{k}{n} \mathbb{E}_{J\subset E} \left[p_k(\cdot| J)\right] + (1-\frac{k}{n})\mathbb{E}_{J\subset E^c} \left[p_k(\cdot| J)\right]
\end{align*}

To be more succinct, we slightly abuse the notion to write $\mathbb{E}_{J\subset E}\left[ p_k(\cdot| J)\right]$ as $ p_k(\cdot| E)$ and $\mathbb{E}_{J\subset E^c} \left[p_k(\cdot| J)\right]$ as $ p_k(\cdot| E^c)$
Similarly for $q_k$,
$$
q_k = \frac{k}{n} q_k(\cdot| E)+ (1-\frac{k}{n})q_k(\cdot| E^c)
$$
With these, we have decomposed $p_k,q_k$ to components with/without the first data instance.
By construction, $p_k(\cdot| E^c) = q_k(\cdot| E^c)$.
With this construction and by comparing the results given in \citep{wang2019subsampled}, one can see that $\mathbb{E}_{q_k}\left[\left(\frac{p_k}{q_k}\right)^{\lambda}\right]$ is the RDP of the subsampled (without replacement) mechanism with subsampling rate $k/n$.

It is clear that $p_k -q_k = \frac{k}{n} (p_k(\cdot| E) - q_k(\cdot| E))$.
Applying this to the ternary-$|\chi|^j$-divergence, we have
\begin{align*}
D_{|\chi|^{j}}(p_k,q_k||r)  
 = \left(\frac{k}{n}\right)^j D_{|\chi|^{j}}(p(\cdot | E),q(\cdot | E)||r)
\end{align*}

We can then follow the procedure in Proposition 21 of \citep{wang2019subsampled} to show that $D_{|\chi|^{j}}(p_k,q_k||r) \leq \left(\frac{k}{n}\right)^j \zeta^j_k(j)$.


The above calculation is related to what we want to calculate, $ \mathbb{E}_{q_k}\left[\left(\frac{p_k}{q_k}\right)^{\lambda}\right] $, by
\begin{align*}
    \mathbb{E}_{q_k}\left[\left(\frac{p_k}{q_k}\right)^{\lambda}\right]
    & \leq  1 + \sum_{j=2}^{\lambda} \left(\frac{k}{n}\right)^j\binom{\lambda}{j} \zeta^{j}_k(j)
\end{align*}

\noindent \textit{Showing} $D_{|\chi|^{j}}(p_k,q_k||r) = \left(\frac{k}{n}\right)^j D_{|\chi|^{j}}(p(\cdot | E),q(\cdot | E)||r)$.
\begin{align*}
D_{|\chi|^{j}}(p_k,q_k||r) &=\mathbb{E}_r\left[\left(\frac{|p_k-q_k|}{r}\right)^j\right] \\
&= \left(\frac{k}{n}\right)^j \mathbb{E}_r\left[\left(\frac{|p_k(\cdot| E) - q_k(\cdot| E)|}{r}\right)^j\right]\\
&= \left(\frac{k}{n}\right)^j D_{|\chi|^{j}}(p(\cdot | E),q(\cdot | E)||r)
\end{align*}
\noindent \textit{Showing}
\begin{align*}
    \mathbb{E}_{q_k}\left[\left(\frac{p_k}{q_k}\right)^{\lambda}\right]
    & \leq  1 + \sum_{j=2}^{\lambda} \left(\frac{k}{n}\right)^j\binom{\lambda}{j} \zeta^{j}_k(j)
\end{align*}
Note that
\begin{align*}
    \mathbb{E}_{q_k}\left[\left(\frac{p_k}{q_k}\right)^{\lambda}\right] &= 1 + \binom{\lambda}{1}\mathbb{E}_{q}\left[\frac{p_k}{q_k}-1\right] + \sum_{k=2}^{\lambda}\binom{\lambda}{k}\mathbb{E}_{q_k}\left[\left(\frac{p_k}{q_k}-1\right)^k\right] \\
    & \leq 1 + \sum_{j=2}^{\lambda} \left(\frac{k}{n}\right)^j\binom{\lambda}{j} \underset{p_k,q_k,r}{\text{sup}} D_{|\chi|^{j}}(p_k,q_k||r) \\
    &= 1 + \sum_{j=2}^{\lambda} \left(\frac{k}{n}\right)^j\binom{\lambda}{j} \zeta^{j}_k(j)
\end{align*}
using $\mathbb{E}_{q_k}\left[\left(\frac{p_k}{q_k}\right)\right] = 1$ and 
\begin{align*}
    \mathbb{E}_{q_k}\left[\left(\frac{p_k}{q_k}-1\right)^j\right] = \mathbb{E}_{q_k}\left[\left(\frac{p_k-q_k}{q_k}\right)^j\right] & \leq \underset{p_k,q_k,r}{\text{sup}}\mathbb{E}_{r}\left[\left(\frac{p_k-q_k}{r}\right)^j\right] \\
&= \zeta^{j}_k(j)
\end{align*}

\subsection{Proof of Theorem \ref{th:shuffgauss}}
Here, we give the full proof of Theorem \ref{th:shuffgauss}, restated below.
\begin{theorem}[Shuffle Gaussian RDP lower bound restated]
Consider adjacent databases $D,D' \in \mathbb{R}^n$ with one-dimensional data instances $D = (0, \dotsc, 0)$, $D'= (1,0, \dotsc, 0)$. 
The R{\'e}nyi divergence (of order $\lambda$) of the shuffle Gaussian mechanism with variance $\sigma^2$ for neighboring datasets with $n$ instances is given by
\begin{align}
   &  D_{\lambda}(\calM(D)||\calM(D')) = \nonumber \\
   & \frac{1}{\lambda-1}\log\left(\frac{e^{-\lambda/2\sigma^2}}{n^\lambda}\sum_{\substack{k_1+\dotsc+k_n=\lambda;\\k_1,\dotsc,k_n\geq 0}}\binom{\lambda}{k_1,\dotsc,k_n}e^{\sum_{i=1}^nk_i^2/2\sigma^2}\right)
\end{align}
\end{theorem}

\begin{proof}
Note that the databases of interest can be represented in the form of $n$-dimensional vector: where $D$ is simply $0$, an $n$-dimensional vector with all elements equal to zero, and $D' = e_1$, where $e_i$ is a unit vector with the elements in all dimensions except the $n$-th ($i\in[n]$) one equal to zero.

The shuffle Gaussian mechanism first applies Gaussian noise (with variance $\sigma^2$) to each data instance, and subsequently shuffle the noisy instances.
The shuffled output of $D$ distributes as $\mathcal{M}(D) \sim (\mathcal{N}(0,\sigma^2), \dotsc, \mathcal{N}(0,\sigma^2))$, an $n$-tuple of $\mathcal{N}(0,\sigma^2)$.
Intuitively, the adversary sees a anonymized set of randomized data of size $n$. 
Due to the homogeneity of $D$, we can write
$\mathcal{M}(D) \sim \mathcal{N}(0,\sigma^2I_n)$ using the vector notation mentioned above. \footnote{Note that the vector space defined here is over the dataset dimension instead of input dimension.}

On the other hand, since all data instances except one is 0 in $D'$, the mechanism can be written as a mixture distribution: $\mathcal{M}(D')\sim \frac{1}{n} (\mathcal{N}(e_1,\sigma^2I_n)+ \dotsc + \mathcal{N}(e_n,\sigma^2I_n))$.
Intuitively, each output has probability $1/n$ carrying the non-zero element, and hence the mechanism as a whole is a uniform mixture of $n$ distributions.
In summary, the neighbouring databases of interest are
\begin{align}
\mathcal{M}(D) &\sim \mathcal{N}(0,\sigma^2I_n), \nonumber\\
\mathcal{M}(D')&\sim \frac{1}{n} \left(\mathcal{N}(e_1,\sigma^2I_n)+ \dotsc + \mathcal{N}(e_n,\sigma^2I_n)\right).
\label{eq:specialdb}
\end{align}


We are concerned with calculating the following quantity, 
$$\mathbb{E}_{x\sim \mathcal{M}(D)}\left[\left(\frac{\mathcal{M}(D')(x)}{\mathcal{M}(D)(x)}\right)^\lambda\right]$$
for RDP:
\begin{align}
    &\mathbb{E}_{x\sim \mathcal{M(D)}}\left[\left(\frac{\mathcal{M}(D')(x)}{\mathcal{M}(D)(x)}\right)^\lambda\right]\nonumber \\ 
    &= \int \left(\frac{\exp{\left[-(x_1-1)^2/2\sigma^2-\sum_{i=2}^n x_i^2/2\sigma^2\right]}+\dotsc}{n\exp[-\sum_{i=1}^nx_i^2/2\sigma^2]}\right)^{
\lambda} \nonumber \\
&\exp[-\sum_{i=1}^nx_i^2/2\sigma^2] \frac{d^nx}{(2\pi \sigma^2)^{n/2}} \label{eq:shuffgauss1} \\
&= \int \left(\frac{\exp{\left[-(x_1-1)^2/2\sigma^2+x_1^2/2\sigma^2\right]}+\dotsc}{n}\right)^{
\lambda}\nonumber \\
&\exp[-\sum_{i=1}^nx_i^2/2\sigma^2] \frac{d^nx}{(2\pi \sigma^2)^{n/2}} \label{eq:shuffgauss2}\\
&= \int \left(\exp{\left[(2x_1-1)/2\sigma^2\right]} + \dotsc \right)^{\lambda}\exp[-\sum_{i=1}^nx_i^2/2\sigma^2]\frac{d^nx}{n^\lambda(2\pi \sigma^2)^{n/2}} \label{eq:main3} \\
&= \int \left(\sum_{i=1}^n\exp{\left[(2x_i-1)/2\sigma^2\right]}\right)^{\lambda}\exp[-\sum_{i=1}^nx_i^2/2\sigma^2]\frac{d^nx}{n^\lambda(2\pi \sigma^2)^{n/2}}.\label{eq:shuffgauss4}
\end{align}
Let us explain the above calculation in detail.
We first notice that $\mathbb{E}_{x\sim \mathcal{M(D)}}\left[\left(\frac{\mathcal{M}(D')(x)}{\mathcal{M}(D)(x)}\right)^\lambda\right]$ is an $n$-th dimensional integral of $x_i$ ($i\in[n]$), as shown in Equation \ref{eq:shuffgauss1}.
For each term $j\in[n]$ in the nominator of $(\dotsc)^\lambda$, we divide it by the denominator $\exp[-\sum_{i=1}^nx_i^2/2\sigma^2]$, yielding $\exp[-(x_j-1)^2/2\sigma^2-\sum_{i\neq j}^nx_i^2/2\sigma^2+\sum_{i=1}^nx_i^2/2\sigma^2]= \exp[-(x_j-1)^2/2\sigma^2 + x_j^2/2\sigma^2]$, with all $x_i$ terms in $i\in[n]$ except $j$ canceled out (Equation \ref{eq:shuffgauss2}).
The term can be further simplified to $\exp[(2x_j-1)/2\sigma^2]$ as in Equation \ref{eq:shuffgauss4}.

Then, we expand the expression $(\dotsc)^\lambda$ using the multinomial theorem:
\begin{align}
\label{eq:k-integral_}
    \left(\sum_{i=1}^n\exp{\left[(2x_i-1)/2\sigma^2\right]}\right)^{\lambda} &=\sum_{\substack{k_1+...+k_n=\lambda;\\k_1,\dotsc,k_n\geq 0}} \binom{\lambda}{k_1,\dotsc,k_n}\nonumber \\
   & \prod_{i=1}^n\exp\left[k_i(2x_i-1)/2\sigma^2\right],
\end{align}
where $\binom{\lambda}{k_1,\dotsc,k_n} = \frac{\lambda!}{k_1!k_2!\dotsc k_n!}$ is the multinomial coefficient, and $k_i \in \mathbb{Z^+}$ for $i\in[n]$.

Before proceeding, we make a detour to calculate the following integral (with $k\in \mathbb{Z^+}$):
\begin{align}
    &\int \exp[k(2x-1)/2\sigma^2]\exp[-x^2/2\sigma^2]\frac{dx}{\sqrt{2\pi\sigma^2}} \nonumber \\
    &= \int \exp[-(x-k)^2/2\sigma^2 +(k^2-k)/2\sigma^2]\frac{dx}{\sqrt{2\pi\sigma^2}} \nonumber \\
    & = \exp[(k^2-k)/2\sigma^2]. \nonumber
\end{align}

Using the above expression and Equation \ref{eq:k-integral_}, we can write Equation \ref{eq:shuffgauss4} as
\begin{align}
&\int \left(\sum_{i=1}^n\exp{\left[(2x_i-1)/2\sigma^2\right]}\right)^{\lambda}\exp[-\sum_{i=1}^nx_i^2/2\sigma^2]\frac{d^nx}{n^\lambda(2\pi \sigma^2)^{n/2}} \nonumber\\
    &=\int\sum_{\substack{k_1+...+k_n=\lambda;\\k_1,\dotsc,k_n\geq 0}} \binom{\lambda}{k_1,\dotsc,k_n}\prod_{i=1}^n\exp\left[k_i(2x_i-1)/2\sigma^2\right] \frac{\exp[-x_i^2/2\sigma^2]d^nx}{n^\lambda(2\pi \sigma^2)^{n/2}} \nonumber \\
    &= \frac{1}{n^\lambda}\sum_{\substack{k_1+...+k_n=\lambda;\\k_1,\dotsc,k_n\geq 0}} \binom{\lambda}{k_1,\dotsc,k_n}\prod_{i=1}^n\exp[(k_i^2-k_i)/2\sigma^2],
\end{align}
where we have moved the $1/n^\lambda$ factor to the front.
Noticing that $\prod_{i=1}^n\exp[-k_i] = \exp[-\sum_i^{n}k_i]$ and that $\sum_{i=1}^nk_i=\lambda$ under the multinomial constraint, we have
\begin{align}
& \frac{1}{n^\lambda}\sum_{\substack{k_1+...+k_n=\lambda;\\k_1,\dotsc,k_n\geq 0}} \binom{\lambda}{k_1,\dotsc,k_n}\prod_{i=1}^n\exp[(k_i^2-k_i)/2\sigma^2] \nonumber\\
    &=\frac{e^{-\lambda/2\sigma^2}}{n^\lambda}\sum_{\substack{k_1+\dotsc+k_n=\lambda;\\k_1,\dotsc,k_n\geq 0}}\binom{\lambda}{k_1,\dotsc,k_n}e^{\sum_{i=1}^nk_i^2/2\sigma^2}. \label{eq:shuffgauss5}
\end{align}
We henceforth obtain the expression given in Equation \ref{eq:shuffgauss} as desired.
\end{proof}

\subsection{Proof of Corollary \ref{co:upper}}
\begin{proof}
Applying the Cauchy-Schwartz inequality, $\sum_{i=1}^n k_i^2 \leq (\sum_{i=1}^n k_i)^2 = \lambda^2$, to the term inside the logarithm of Equation \ref{eq:shuffgauss},
\begin{align*}
   & \frac{e^{-\lambda/2\sigma^2}}{n^\lambda}\sum_{\substack{k_1+\dotsc+k_n=\lambda;\\k_1,\dotsc,k_n\geq 0}}\binom{\lambda}{k_1,\dotsc,k_n}e^{\sum_{i=1}^nk_i^2/2\sigma^2} \\
   &\leq \frac{e^{-\lambda/2\sigma^2}}{n^\lambda}\sum_{\substack{k_1+\dotsc+k_n=\lambda;\\k_1,\dotsc,k_n\geq 0}}\binom{\lambda}{k_1,\dotsc,k_n}e^{\lambda^2/2\sigma^2}\\
    &= e^{(\lambda^2-\lambda)/2\sigma^2},
\end{align*}
noting that  $\sum_{\substack{k_1+\dotsc+k_n=\lambda;\\k_1,\dotsc,k_n\geq 0}}\binom{\lambda}{k_1,\dotsc,k_n}=n^\lambda$, we get the desired expression.
\end{proof}
\section{Algorithms}
\label{app:algo}
\subsection{Details of the protocol}
The full shuffled check-in protocol is described in Algorithm \ref{alg:protocol}.

\begin{figure}
\begin{small}
\centering
\begin{minipage}[t]{0.52\textwidth} 
\begin{center}
\begin{algorithmic}
\SUB{Server-side protocol:}
   \STATE \emph{parameters:}  Number of rounds $T$
   \STATE Initialize model $\theta_1$
   \FOR{$t \in [T]$}
        \STATE Broadcast model $\theta_t$ and request users to participate in training
        \STATE Shuffler receives client responses and outputs the shuffled messages $(\tilde{g}_t)$ to aggregator
        \STATE $\theta_{t+1}$ $\leftarrow$ $\text{ModelUpdate}(\theta_t; \tilde{g}_t)$
        \STATE Output $\theta_{t+1}$
   \ENDFOR
\end{algorithmic}
\end{center}
\vfill
\end{minipage}
\hfill
\begin{minipage}[t]{0.47\textwidth}
\begin{center}
\begin{algorithmic}
 \SUB{Client-side protocol for User $i$:}
  \STATE \emph{parameters:} probability $p$, loss function $l$, local randomizer $\calA$, Number of rounds $T$
  \STATE \emph{private data:} $x_i \in \calD$
   \FOR{$t \in [T]$}
  \STATE Receive request to participate in training
  \IF{a $p$-biased coin returns head}
    \STATE Download $\theta_t$
    \STATE $\tilde{g}_{t} \leftarrow \calA(\nabla_\theta  l(\theta_t;x_i))$
    \STATE Send $\tilde{g}_{t}$ to shuffler
  \ENDIF
  \ENDFOR
  
\end{algorithmic}
\end{center}
\vfill
\end{minipage}
\end{small}

\mycaptionof{algorithm}{The distributed protocol of shuffled check-in.}
\label{alg:protocol}

\end{figure}
\subsection{More on the numerical computation of the lower bound}
We exploit the permutation invariance inherent in Equation \ref{eq:shuffgauss} to perform more efficient computation.
Equation \ref{eq:shuffgauss} contains a sum of multinomial coefficients multiplied by an expression with integer $k_i$ ($i\in[n]$) constrained by $k_1 + \dotsc +k_n = \lambda$.
We first obtain all possible $k_i$'s with the above constraint, without counting same summands differing only in the order.
This is integer partition, a subset sum problem known to be NP.
It is also related to partition in number theory \citep{andrews2004integer}.
Nevertheless, the calculation can be performed rather efficiently in practice \citep{kelleher2009generating}.
We denote this operation by \texttt{GetPartition}.


Note that there is a one-to-one mapping between the term $e^{\sum_{i=1}^nk_i^2/2\sigma^2}$ in Equation \ref{eq:shuffgauss5} and the partition obtained above.
We can therefore calculate the number of permutation corresponding to each partition, multiplied by the multinomial coefficient, and $e^{\sum_{i=1}^nk_i^2/2\sigma^2}$, and sum over all the partitions to obtain a numerical expression of Equation \ref{eq:shuffgauss5}.
For a partition with the number of repetition of exponents with the same degrees (or the counts of unique $k_1,\dotsc,k_n$; denote this operation by \texttt{GetUniqueCount}) being $\kappa_1, \dotsc, \kappa_{l}$ ($l\leq n$), the number of permutation can be calculated to be $n!/(\kappa_1!\dotsc \kappa_l!)$.
We describe our algorithm for evaluating Equation \ref{eq:shuffgauss} in Algorithm \ref{alg:num}.

\begin{algorithm}[t]
\caption{Numerical evaluation of Equation \ref{eq:shuffgauss}.}
\label{alg:num}
\begin{algorithmic}[1]
\STATE\textbf{Inputs:} Database size $n$, RDP order $\lambda$, variance of Gaussian mechanism $\sigma^2$.
\STATE \textbf{Output:} RDP parameter $\eps$.
\STATE \textbf{Initialize:} $\Gamma:\{\emptyset\}$, $\texttt{Sum}=0$.
\STATE $\Gamma \leftarrow \texttt{GetPartition}(\lambda)$
\FOR { $\rho \in \Gamma$}
\STATE $\{\kappa_i\} \leftarrow \texttt{GetUniqueCount}(\rho)$
\STATE $\texttt{Sum} \leftarrow \texttt{Sum} + \frac{e^{-\lambda/2\sigma^2}}{n^\lambda}\binom{\lambda}{k_1\dotsc k_n}\frac{n!}{\kappa_1!\dotsc\kappa_i!} e^{\sum_{j=1}^{n}k_j^2}$ where $\{k_i\}\in \rho$
\ENDFOR
\STATE \textbf{Return:} $\eps = \texttt{Sum}$
\end{algorithmic}
\end{algorithm}

\section{More Results}
 \label{app:exp}
\subsection{Alternative approaches}
Here, we comment on several alternative strategies to analyze the shuffled check-in mechanism and argue why they are not suitable.

\noindent \textbf{Privacy profile and coupling.}
In Section \ref{subsec:sci_adp}, we convert the approximate DP of two base mechanisms, shuffle and subsampled shuffle mechanisms, to RDP for privacy accounting. 
One may wonder if it is possible to derive the approximate DP of the shuffle check-in mechanism directly, without using shuffle or subsampled shuffle mechanisms, and finally convert it to RDP.
Here, we show that while it is theoretically possible to derive such an expression, but numerical difficulties arise making its evaluation impractical.

In \citep{balle2018privacy}, a general treatment of subsampling has been giving using the concepts of privacy profile and distribution coupling.
While their approach to Poisson subsampling under replacement DP assumes that the base mechanism decouples from the subsampling operation (Theorem 14 of \citep{balle2018privacy}), it can be extended to our case quite straightforwardly.

Following the notation of \citep{balle2018privacy}, we let $\mathcal{M'}$ denote the shuffled check-in mechanism, and $\mathcal{M}_k$ be the shuffle mechanism on dataset of size $k$.
Then, the privacy profiles of these mechanisms ($\delta_{\mathcal{M'}}$ and $\delta_{\mathcal{M}_k}$) are related as follows.

$$
\delta_{\mathcal{M'}}(\epsilon) \leq \gamma \beta \delta_{\mathcal{M}_0}(\epsilon_0) + \gamma (1-\beta)\left(\sum_{k=1}^{n-1}\tilde{\gamma}_k 
\delta_{\mathcal{M}_k}(\epsilon_k) + \tilde{\gamma}_n\right)
$$
Here, $\epsilon = \log(1+\gamma(e^{\epsilon_0}-1))$, $\beta = e^{\epsilon-\epsilon_0}$, $\epsilon_k = \epsilon_0 + \log(\frac{\gamma}{1-\gamma}(\frac{n}{k}-1))$, and $\tilde{\gamma}_k = \binom{n-1}{k-1}\gamma^{k-1}(1-\gamma)^{n-k}$.

To solve for $\delta_{\mathcal{M'}}(\epsilon)$, we need to calculate the shuffling privacy profile $\delta_{\mathcal{M}_k}$.

Although the numerical routine to calculate the privacy profile of an $\epsilon_0$-LDP randomizer is given in \citep{feldman2022hiding}, we find that one needs to carefully tune the parameter $\delta_1$ as in Equation \ref{eq:shuffdelta} to obtain reasonable values for the privacy profile (between 0 and 1).
Furthermore, the final result is unstable numerically as it fluctuates wildly with the tuning parameter.
The problem worsens for shuffled check-in as such a tuning has to be performed separately for each $k$ as shown in the above equation's sum operation. 
To this end, we resort to solving the privacy accounting problem through the conversion of shuffle/subsampled shuffle mechanism as introduced in Section \ref{subsec:sci_adp}, which is a more stable approach.
 
\noindent \textbf{Numerical accountant.}
It is recently shown that performing even tighter privacy accounting is possible via privacy loss distributions and fast Fourier transforms \citep{koskela2020computing,gopi2021numerical}.
However, due to computational constraints, these techniques are limited to privacy loss distributions that can be reduced to a low-dimensional problem, and therefore not applicable to shuffled check-in, a high-dimensional mixture distribution of $n$ components.
See e.g., Section 5 of \citep{koskela2023numerical}, where a similar problem occurs in the shuffle model.
It is henceforth more appropriate to attack the privacy composition problem of shuffled check-in with RDP.

 \subsection{Experimental details}
 \noindent \textbf{Shuffle Gaussian.}
Figure \ref{fig:shuffgauss} compares the privacy amplification with respect to the number of composition, between the lower bound and the upper bound obtained by converting the shuffle approximate-DP to RDP.
Parameters are set to be the same as in Table \ref{tab:adp}: $\delta=\delta_1=1/n$, $\delta_0=1/\delta^2$. 
Closing the gap between the lower and upper bound is an interesting direction for future work.

\begin{figure}
        \centering 
    \includegraphics[width=0.45\textwidth]{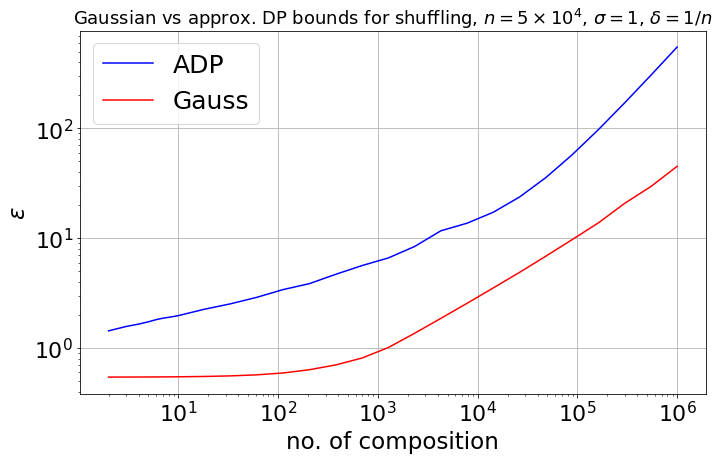}
  \caption{Lower bound (Equation \ref{eq:shuffgauss}, red) is compared with the shuffle approximate-DP (blue) varying the number of composition.}
      \label{fig:shuffgauss}
\end{figure}
 
\noindent \textbf{Distributed Check-in.}
 In Figure \ref{fig:ddp}, we show that distributed check-in can have much suppressed $\epsilon$ compared to the subsampling without replacement case fixing the subsampling rate to be equal to the check-in rate (that is, the subsampling without replacement RDP is $\epsilon_{k}^{\text SG}$ with $k=n\gamma$), and Gaussian noise of $\sigma=1$ added to each individual report for both cases.

\noindent\textbf{DP-SGD.}
Figure \ref{fig:dpsgd} shows the privacy-utility trade-offs between the shuffle and central DP models for training a convolutional neural network with the standard MNIST dataset \citep{lecun2010mnist}.
Here, In the shuffle model, each sample's gradient (of dimension $d$) is clipped, $g\leftarrow g \cdot{\rm min}\left\{1, \frac{C}{|g|_2}\right\}$, and perturbed with Gaussian noise, $\tilde{g}\to g + \mathcal{N}(0,C^2\sigma^2I_d)$, where $C$ and $\sigma$ are the clipping size and noise multiplier respectively.
The aggregated gradients are then used to update the model at each iteration: $\sum_{i=1}^{B}\left\{\tilde{g}_i +  \mathcal{N}(0, C^2\sigma^2I_d)\right\}= \sum_{i=1}^{B}\tilde{g}_i +  \mathcal{N}(0, BC^2\sigma^2I_d)$, with $B$ the batch size.
The central-DP model adds Gaussian noises with noise multiplier $\sigma_{\text{CDP}}$ to the \textit{aggregated} gradients of a batch instead:  
$\sum_{i=1}^{B}\tilde{g}_i + \mathcal{N}(0, C^2\sigma_{\text{CDP}}^2I_d)$.
For fair comparisons, we set $\sigma_{\text{CDP}} = \sqrt{B}\sigma$ such that the total amount of noise added is the same for both models.

\begin{figure}
        \centering 
    \includegraphics[width=0.45\textwidth]{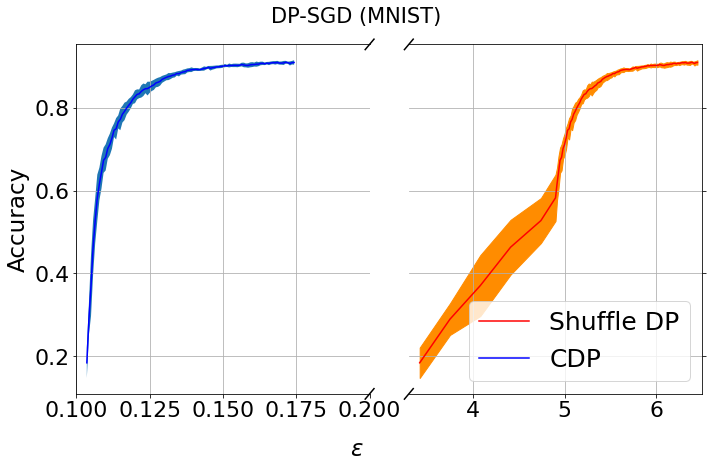}
  \caption{Comparison of privacy-utility trade-offs of the standard DP-SGD, under the central DP (blue) and the shuffle DP (orange) trust models.}
      \label{fig:dpsgd}
\end{figure}

\begin{figure}
        \centering 
    \includegraphics[width=0.45\textwidth]{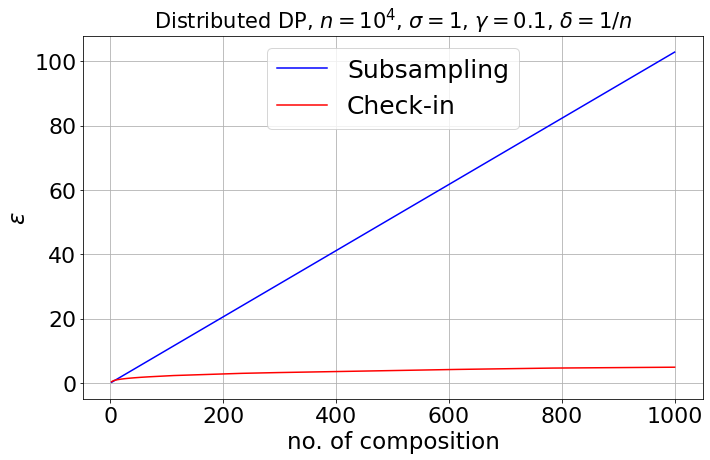}
  \caption{Comparison between privacy amplification via subsampling without replacement (blue), and distributed check-in (red), with respect to the number of composition. 
  The subsampling rate is set equal to the check-in rate, $\gamma=0.1$.}
      \label{fig:ddp}
\end{figure}

\noindent \textbf{Distributed learning.}
The datasets in consideration are MNIST handwritten digit dataset \citep{lecun1998gradient} and CIFAR-10 dataset \citep{krizhevsky2009learning}.
Each user holds a data instance from the train dataset (hence the population size is 60,000 and 50,000 for MNIST and CIFAR-10 respectively), and the trained model's accuracy is evaluated with the test dataset.
For MNIST, we train a convolutional neural network similar to the one used in \citep{erlingsson2020encode} from scratch.
For CIFAR-10, we first pre-train a neural network with CIFAR-100 \citep{krizhevsky2009learning} without adding noise assuming that CIFAR-100 is a public dataset.
Then, the final layer is fine-tuned privately with CIFAR-10.

One fact worth mentioning for the adaptation to the distributed setting is that as we are working with replacement DP, the sensitivity should be set to twice of those given in \citep{abadi2016deep}, which is defined in terms of addition/removal DP \citep{dwork2006calibrating,vadhan2017complexity}.

We show our full results in Figure \ref{fig:exp2}.

\begin{figure}[t]
  \centering
  \begin{subfigure}{0.45\textwidth}
      \centering 
    \includegraphics[scale=0.2]{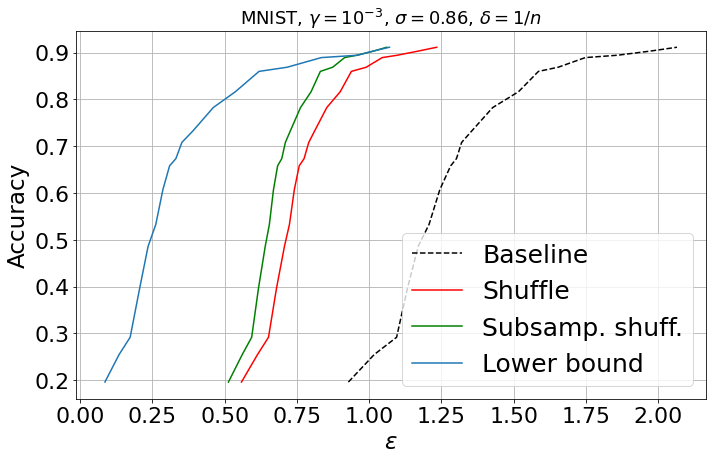}
    \caption{MNIST.}
    \label{fig:mnist}
  \end{subfigure}\hfil 
  \begin{subfigure}{0.45\textwidth}
      \centering 
    \includegraphics[scale=0.2]{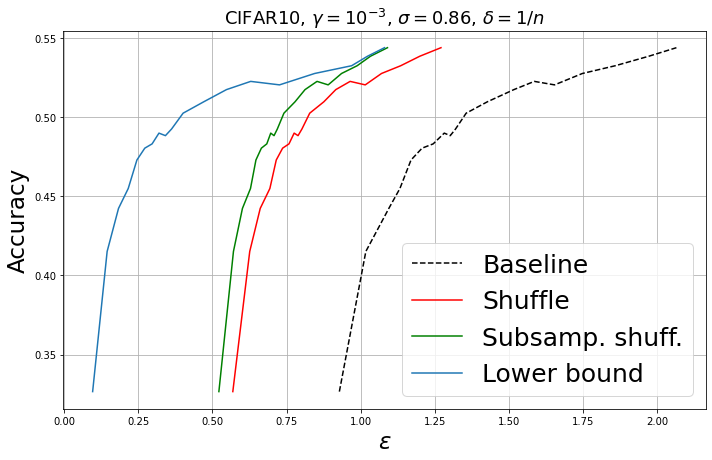}
    \caption{CIFAR.}
    \label{fig:cifar}
  \end{subfigure}\hfil 
  \caption{Upper (red and green) and lower (blue) bounds of the shuffled check-in mechanism applied to machine learning tasks, based on our analysis. These are tighter than the baseline approach (black dotted) based on existing work.}
  \label{fig:exp2}
  \end{figure}
  
\noindent \textbf{Network architecture.}
The neural network we use in the experiments presented in Section \ref{sec:discussion} is as in Table \ref{tab:nn}.
\begin{table}[]
 \caption{Neural network architecture used in our experiment.}
    \label{tab:nn}
    \centering
    \begin{tabular}{c|c}
    \toprule
        Layer & Parameters  \\
                        \midrule
        Convolution & 16 filters of $8\times8$ , strides 2 \\
        Max-pooling & $2\times2$ \\
        Convolution & 32 filters of $4\times4$, strides 2\\
        Max-pooling & $2\times2$ \\
        Linear & 32 units \\
        Softmax & 10 units \\
        \bottomrule
    \end{tabular}
\end{table}

\end{document}